\newtheorem{corollary}{Corollary}
\newcommand{\be}{\begin{equation}}
\newcommand{\ee}{\end{equation}}
\newcommand{\bea}{\begin{eqnarray}}
\newcommand{\eea}{\end{eqnarray}}
\renewcommand{\vec}[1]{ \mbox{$\mathbf {#1}$}}
\newcommand{\ei}{\end{itemize}}
\newcommand{\bi}{\begin{itemize}}
\newcommand{\MB}{\left[\begin{array}}
\newcommand{\ME}{\end{array}\right]}
\newtheorem{theorem}{Theorem}
\newtheorem{proposition}{Proposition}
\begin{document}
\title{An Asymptotically Optimal Contextual Bandit Algorithm Using Hierarchical Structures}
\author{ Mohammadreza Mohaghegh Neyshabouri, Kaan Gokcesu, Huseyin Ozkan, and Suleyman S. Kozat, {\em Senior Member, IEEE} \thanks{This work is supported in part by Turkish Academy of Sciences Outstanding Researcher Programme, TUBITAK Contract No. 113E517.

M. Mohaghegh N. and S. S. Kozat are with the Department of Electrical and Electronics Engineering, Bilkent University, Ankara, Turkey, e-mail: \{mohammadreza, kozat\}@ee.bilkent.edu.tr, tel: +90 (312) 290-2336. \par 
K. Gokcesu is with the Department of Electrical Engineering and Computer Science, Massachusetts Institute of Technology, Cambridge, MA 02139 USA, e-mail: gokcesu@mit.edu. \par 
H. Ozkan is with the Faculty of Engineering and Natural Sciences at Sabanc{\i} University, Istanbul 34956 Turkey, e-mail: hozkan@sabanciuniv.edu, tel: +90 (216) 483-9594.}}
\maketitle
\begin{abstract}
We propose online algorithms for sequential learning in the contextual multi-armed bandit setting. Our approach is to partition the context space and then optimally combine all of the possible mappings between the partition regions and the set of bandit arms in a data driven manner. We show that in our approach, the best mapping is able to approximate the best arm selection policy to any desired degree under mild Lipschitz conditions. Therefore, we design our algorithms based on the optimal adaptive combination and asymptotically achieve the performance of the best mapping as well as the best arm selection policy. This optimality is also guaranteed to hold even in adversarial environments since we do not rely on any statistical assumptions regarding the contexts or the loss of the bandit arms. Moreover, we design efficient implementations for our algorithms in various hierarchical partitioning structures such as lexicographical or arbitrary position splitting and binary trees (and several other partitioning examples). For instance, in the case of binary tree partitioning, the computational complexity is only log-linear in the number of regions in the finest partition. In conclusion, we provide significant performance improvements by introducing upper bounds (w.r.t. the best arm selection policy) that are mathematically proven to vanish in the average loss per round sense at a faster rate compared to the state-of-the-art. Our experimental work extensively covers various scenarios ranging from bandit settings to multi-class classification with real and synthetic data. In these experiments, we show that our algorithms are highly superior over the state-of-the-art techniques while maintaining the introduced mathematical guarantees and a computationally decent scalability.
\end{abstract}
\begin{keywords}
Contextual bandits, universal, online learning, adversarial, big data, multi-class classification.
\end{keywords}

\section{Introduction}
We study online learning \cite{JLin,LJian} in the contextual multi-armed bandit setting \cite{ARakotomamonjy,Jpeng,GDitzler,ref1,OnlineLearning,CesaBianchi}. In the classical formulation of the multi-armed bandit problem, one of the available $M$ bandit arms (or {\em actions}) is chosen at each round to obtain a reward (or loss), and the reward (or loss) of all of the other unchosen $M-1$ arms stay oblivious. The objective is to maximize the cumulative reward of the selected arms in a series of rounds. Since the reward we would obtain from the other arms remain hidden, this setting can be considered as a limited feedback version of prediction with expert advice \cite{SEYuksel,HozkanTNNLS,expert1,expert2,expert3,expert4}. Additionally, the well-known fundamental trade-off between exploration and exploitation \cite{TMannucci, Cesa} naturally appears in multi-armed bandits. One should balance exploitation of actions that gave the highest payoffs in the past and exploration of actions that might give higher payoffs in the future.\par 

The multi-armed bandit problem has attracted significant attention due to the applicability of the bandit setting in a wide range of applications from online advertisement \cite{bandit3} and recommender systems \cite{tekin,Tang:2014:ECB:2645710.2645732,XLuo} to clinical trials \cite{clinical} and cognitive radio \cite{cogra1,cogra2}. For example, in the online advertisement application, different ads available to display to users are modeled as the bandit arms and the act of clicking by the user on the displayed ad is modeled as the reward \cite{bandit3}. \par 

In many instances of the bandit algorithms, additional information is available \cite{Lu} such as the age or the gender of the patient in clinical trials \cite{clinicex}, which is useful about the arm selection decision. However, most of the conventional bandit algorithms do not exploit or fail to fully exploit this information\cite{bandit1,bandit4,bandit5}. To remedy, contextual multi-armed bandit algorithms are introduced \cite{bandit2,bandit3,Cesa}, where the additional information is represented as a context vector.
For example, in the online advertisement applications, this context vector may contain certain information about the users such as historical activities or demographic/geographical information. Then the goal of the multi-armed bandit problem is extended to maximally exploit this additional information, i.e., the context, for optimizing the arm selection strategy and therefore gaining more rewards (or suffering less loss).\par 

We consider the contextual extension in the online setting, where we operate sequentially on a stream of observations from a possibly non-stationary, chaotic or even adversarial environment \cite{SRotaBulo, TangAd, Auerthenonstochastic}. Hence, we have no statistical assumptions on the context vectors and behavior of the bandit arms so that our results are guaranteed to hold in an individual sequence manner \cite{Cesa}. We follow a competitive algorithm perspective \cite{Cesa} and define the performance (total time accumulated reward or loss) with respect to a competition class of context dependent bandit arm selection policies.
For this purpose, we design an exponentially large and parameterized competition class of predetermined mappings from the space of context vectors to the bandit arms such that the best arm selection policy\footnote{This best arm selection policy is based on the fixed best partitioning of the context space and the best assignment of the arms to the regions of that best partition. It is not necessarily in our competition class. However, it can be approximated arbitrarily well by the optimal mapping in the class by varying the class parameter; and it can be determined only when the complete data stream is observed.} can be approximated arbitrarily well to a desired degree by the optimal mapping in the competition class.
We point out that each mapping in our competition class partitions the space of context vectors into several disjoint regions and assigns each one of these regions to one of the bandit arms, i.e., each mapping selects the bandit arm corresponding to the region containing the observed context vector. 
Based on this competition class of such mappings, our goal is to asymptotically -at least- achieve\footnote{In addition to achieving, we might well outperform since our approach is data driven and based on combination of partitions, i.e., we do not rely on a single fixed partition.} the performance of the optimal mapping as well as the performance of the best arm selection policy at a faster convergence (performance-wise or in terms of the convergence of the regret upper bound to zero) rate compared to the state-of-the-art as more data is observed.\par

In order to generate partitions of the context space and therefore a rich competition class, we use various hierarchical partitioning structures \cite{ContextWeighting} such as the ones based on lexicographical or arbitrary position splitting, binary trees and several other partitioning examples, cf. Section \ref{BT}. In our design, each of these structures leads to a different competition class but approximates (arbitrarily well, and even perfectly if desired) the same best arm selection policy by the optimal mapping in the corresponding competition class. However, each hierarchical structure encodes the best arm selection policy differently and one of them is the most efficient in the sense of the required number of partition regions (i.e. less number of regions means higher efficiency). Therefore, we explore various hierarchical structures and introduce algorithms for each of such structures by using a carefully designed weighting over the corresponding competition class. The output of the introduced algorithms is the optimal data adaptive combination (w.r.t. the designed weighting) of the policies (aforementioned mappings) in the competition class. Our weighting/adaptive combination favors simpler models in the beginning of the data stream and gradually switches to more complex ones as the data overwhelms. \par 

As a result, our algorithms are guaranteed to asymptotically perform -at least- as well as the best arm selection policy. We achieve this performance optimality at a faster convergence rate (for instance, at the rate $O(\sqrt{(R M \ln{M} \ln{N})/T})$ in the case of binary tree partitioning after averaging the regret bound over $T$ where $R$ is the number of regions in the optimal partition, $M$ is the number of bandit arms, $N$ is the number of regions in the finest partition in the competition class and $T$ is the number of rounds) compared to the state-of-the-art\footnote{The convergence rates given here samples our general regret results (after averaging over $T$) in the case of binary tree partitioning. Our rates for other partitionings in our generic class of hierarchical structures naturally vary but our superiority compared to the state-of-the-art stays valid in a similar manner, cf. Section \ref{BT} for our complete regret results for all structures.} rate $O(\sqrt{(M N\ln{M})/T})$. Note that here, typically, $N>>R$ is the dominating factor. Our superior performance is due to exploiting the right hierarchical partitioning structure that encodes the best policy more efficiently and therefore assigns higher initial weights to the optimal partition. This exploitation of the right structure with the introduced weighting scheme also mitigates the overfitting issue as an additional merit. \par 

We emphasize that our algorithms are designed to work for a generic class of hierarchical partitioning structures and our optimality results do hold for each type of structure in this generic class. Therefore, one can use the proposed algorithms with any type of partitioning that is appropriate for the target application with the corresponding performance guarantees. Such guarantees include upper bounds on the regret w.r.t. the best arm selection policy that are mathematically proven to vanish at $O(1/\sqrt{T})$ (after averaging over $T$) in a superior manner over the state-of-the-art, cf. the following Section \ref{sec:PA} {\em Prior Art} and Section \ref{BT} for detailed comparisons. We also present computationally highly efficient implementations for the introduced algorithms that, for instance, combine $M^N$ mappings with only computational complexity of $O(M\ln{N})$ in the case of binary tree partitioning structure. Through an extensive set of experiments with real and synthetic data, we demonstrate the proposed approach in several scenarios such as multi-class classification, online advirtisement and multi-armed bandit along with various partitioning structures. In these experiments, our algorithms are shown to significantly outperform the state-of-the-art techniques with real-time data processing and strong modeling capabilities.

\subsection{Prior Art} \label{sec:PA}
The contextual bandit problem is mostly studied in the stochastic setting \cite{bandit2,hsu2014taming,Dudik11}, where context vectors and losses are assumed to be drawn randomly and independently from an unknown distribution. Additional assumptions regarding the relations between the context vectors and the arm losses are also used in other studies, e.g., a linear relation in \cite{bandit3} and \cite{auer2002using}, and more general ones in \cite{agarwal2012contextual}. These algorithms essentially fail to hold their performance guarantees if the context vectors or the arm losses are chosen by an adversary rather than a prefixed distribution. \par

An alternative to the stochastic approaches is the adversarial setting, where algorithms do not use any assumptions on the behavior of the context vectors and bandit arms. The well-known \textit{EXP3} algorithm \cite{Auerthenonstochastic} formulates the non-contextual bandit problem in an adversarial setting and achieves a regret upper bound\footnote{We illustrate regret upper bounds  without averaging over $T$ here in this section; but with averaging in the previous section to demonstrate the convergence to $0$ there.} of ${O}(\sqrt{TM \ln{M}})$ against the best arm. \textit{S-EXP3} algorithm \cite{Cesa} is a naive extension of \textit{EXP3} in the contextual  setting, which partitions the context space and runs independent \textit{EXP3} algorithms over each one of the partition regions.  \textit{S-EXP3} achieves a regret upper bound of ${O}(\sqrt{TNM \ln{M}})$ against the best mapping from the regions to the bandit arms, where $N$ is the number of regions in the partition of the context space.
As implied by the regret bound, the \textit{S-EXP3} algorithm works well only when the complexity (the granularity or the level of detailing/fine-ness) of the required partitioning to model the truly optimal selection policy is relatively small, otherwise it quickly overfits and suffer from insufficient data.\par 
\begin{figure*}[t]
	\begin{subfigure}{0.25\textwidth}
		\includegraphics[width=\linewidth]{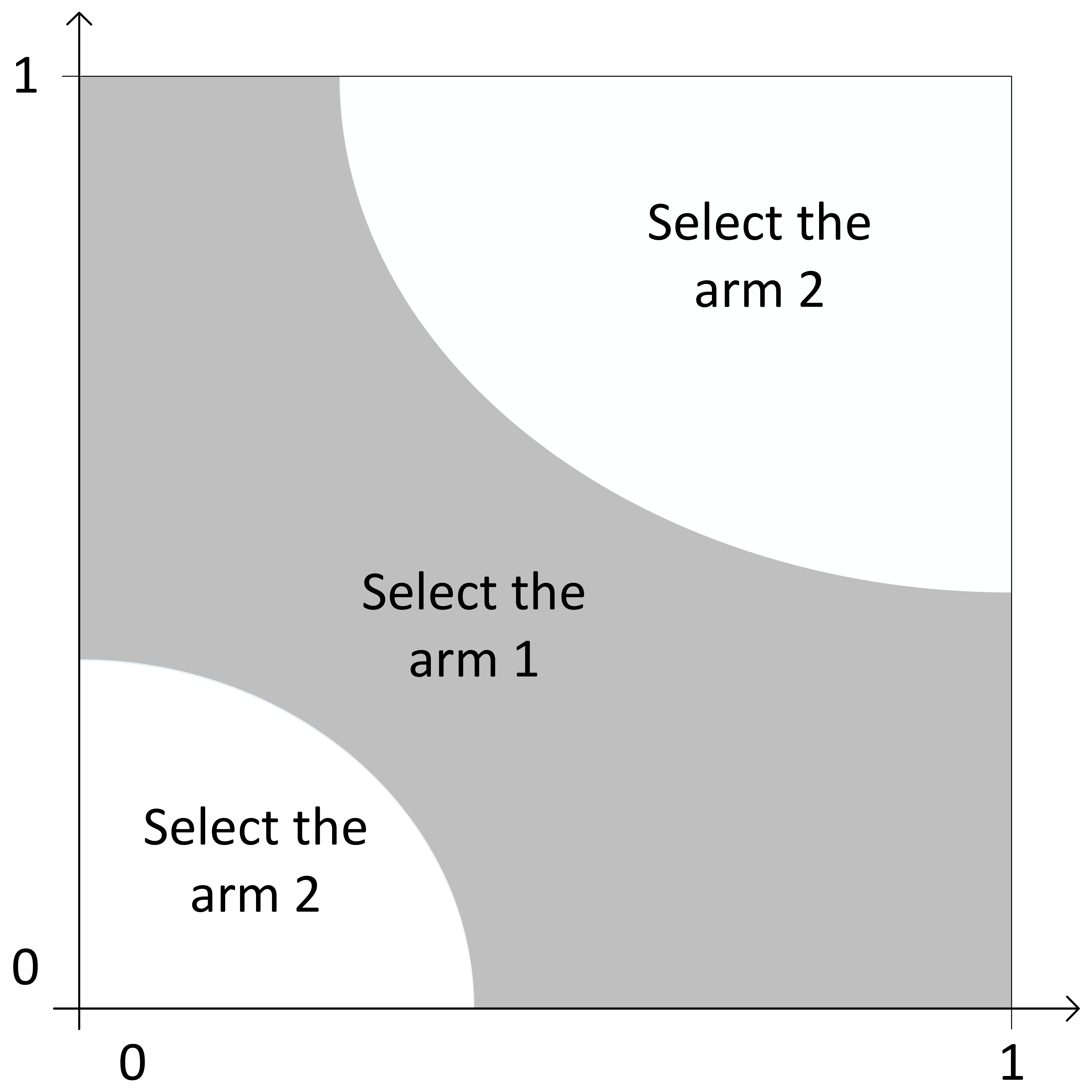}
		\caption{An example mapping from the context space $[0,1]^2$ to the set of bandit arms $\lbrace  1,2 \rbrace$.~~~~~~~~~~~} \label{fig:1a}
	\end{subfigure}
	\hspace*{\fill} 
	\begin{subfigure}{0.25\textwidth}
		\includegraphics[width=\linewidth]{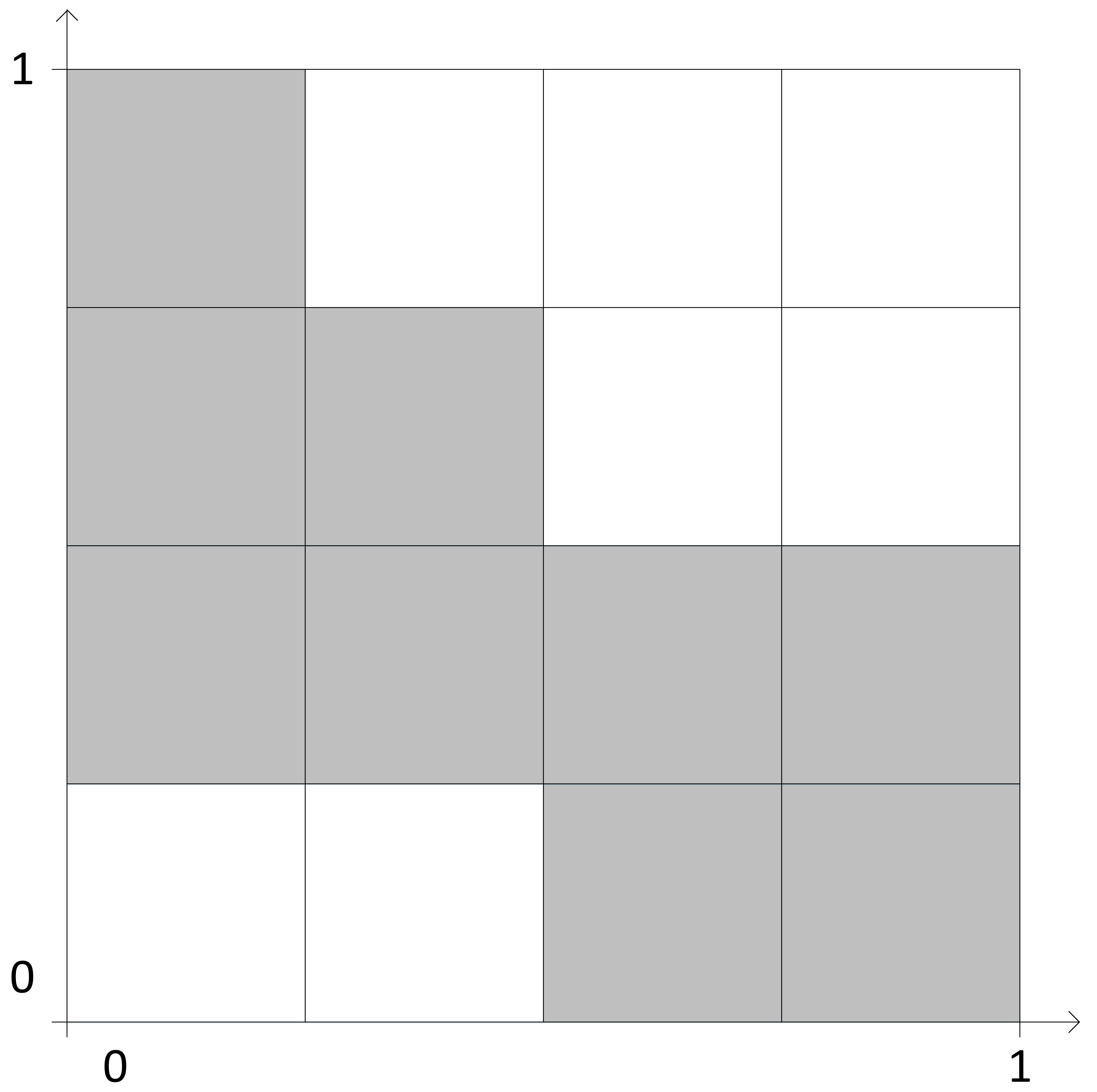}
		\caption{Closest mapping in the quantized competition class with $16$ quantization levels to the mapping in Fig. \ref{fig:1a}.} \label{fig:1b}
	\end{subfigure}
	\hspace*{\fill} 
	\begin{subfigure}{0.25\textwidth}
		\includegraphics[width=\linewidth]{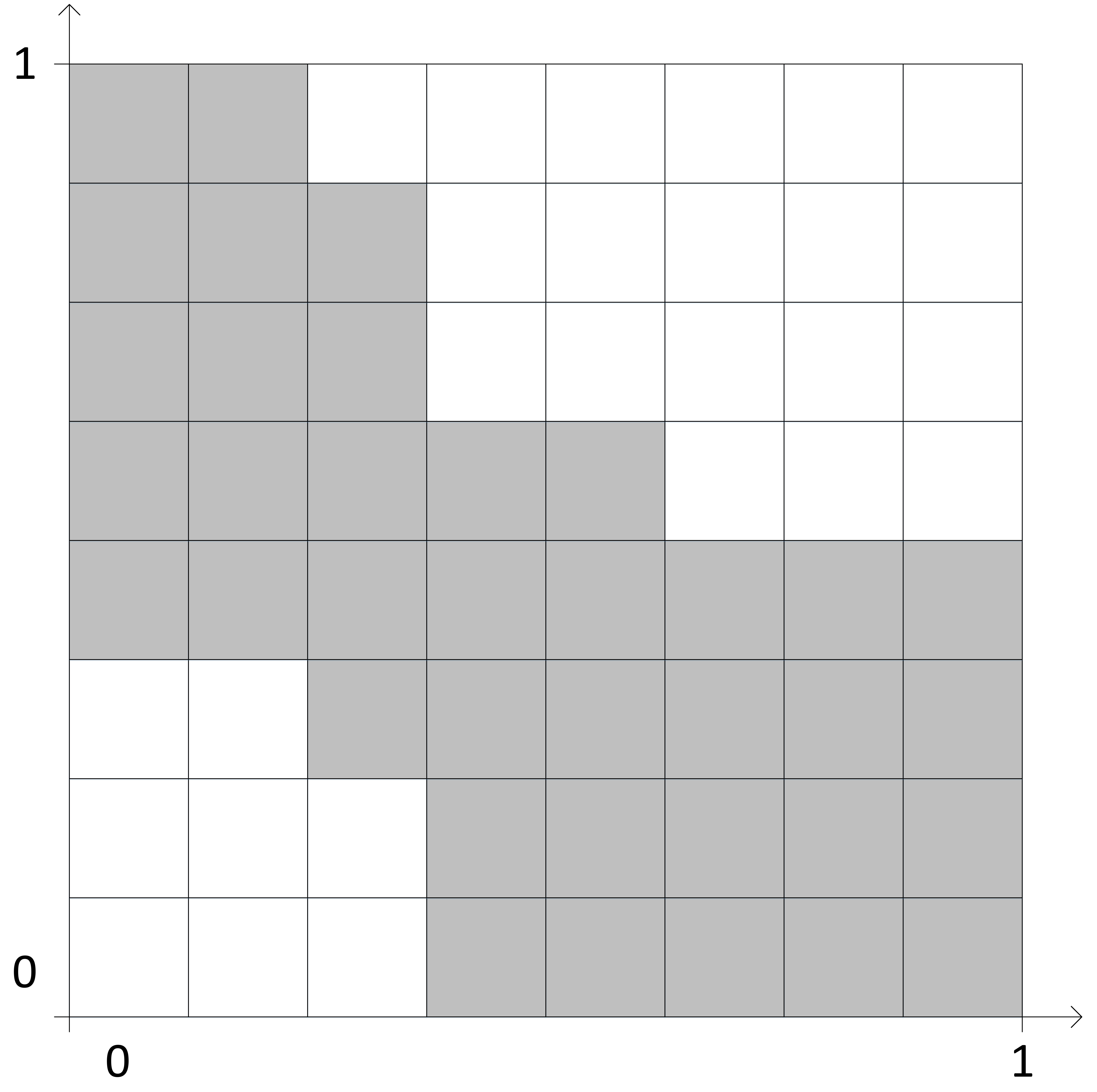}
		\caption{Closest mapping in the quantized competition class with $64$ quantization levels to the mapping in Fig. \ref{fig:1a}.} \label{fig:1c}
	\end{subfigure}
	\caption{An example mapping from the context space to the set of bandit arms and its approximations in the quantized competition classes. In each mapping above, the dark and bright sections are mapped to the arms 1 and 2, respectively.} 
	\label{fig:1}
\end{figure*}
The \textit{EXP4} algorithm \cite{Auerthenonstochastic} is another extension of \textit{EXP3} in the contextual setting. In this algorithm, a set of $K$ experts observe the context vectors and suggest distributions on the arms. Their suggestions are adaptively combined to select the arm to pull. It is shown that \textit{EXP4} achieves a regret upper bound of ${O}(\sqrt{TM \ln K})$ against the best expert. Considering the $M^N$ mappings from a partition of the context space to the arms as the $K$ experts, \textit{EXP4} achieves ${O}(\sqrt{TNM\ln{M}})$ against the optimal mapping. As we show in Section \ref{Algorithm}, the \textit{EXP4} algorithm can be improved by producing an initial tendency (in earlier times of the stream) toward the mappings of smaller complexity. In this case, although the finest partition has $N$ regions (and hence there are $M^N$ mappings in total), it suffices to run \textit{EXP4} over ${O}((NM)^R)$ mappings with $R$ regions resulting a regret bound of ${O}(\sqrt{TMR\ln{(NM)}})$, if the optimal partition consists of $R$ regions.  However, the main problem with this algorithm is its computational complexity of ${O}((NM)^R)$. On the other hand, the \textit{CSB-FTPL} algorithm \cite{effalg} achieves a regret upper bound of ${O}(T^{2/3}M\sqrt{\ln{K}})$ against the best expert among a set of $K$ experts with a computational complexity that is polynomial in $\ln{K}$. Hence, running \textit{CSB-FTPL} over ${O}((NM)^R)$ mappings with $R$ disjoint regions yields a regret upper bound of ${O}(T^{2/3}M\sqrt{R\ln{N}})$ with a polynomial computational complexity in $\ln{N}$. \par 

We emphasize that we seek to achieve a regret upper bound vanishing (w.r.t. rounds/time after averaging over $T$) faster than that of \textit{EXP4} with a computational complexity linear in $\ln{N}$ which allows us to grow the hierarchical structure freely. To this end, our algorithms not only drastically reduce the computational complexity (e.g., down to $O(M\ln{N})$ in the case of binary tree partitioning) compared to the discussed state-of-the-art techniques, but also achieves a regret upper bound of ${O}(\sqrt{TMR\ln{M} \ln{N}})$.\par 

Finally, a simple instance of our hierarchical structures, the context trees, are widely used in various applications including but not limited to data compression \cite{willems1995context,sadakane2000implementing}, estimation \cite{csiszar2006context,6780620}, communications \cite{babich1999context}, regression \cite{kozat2007universal,vanli2014comprehensive} and classification \cite{HOzkanTSP}. In all aforementioned applications, context trees are used to partition the context space in a nested structure, run an independent adaptive model over each one of the tree nodes and combine the models. 
On the other hand, in this paper, we use a generalized novel notion of hierarchical structures that is specifically designed for the completely different multi-armed contextual bandit problem.

\subsection{Contributions}
\begin{itemize}
	\item We introduce novel and efficient contextual bandit arm selection algorithms, which first quantize the space of context vectors and then achieve the performance of the optimal mapping from the quantized regions to the bandit arms (in the average loss per round sense).
	\item We introduce an efficient quantization method and show that using this quantization method, our algorithms asymptotically achieve (not only the optimal mapping but also) the performance of the best arm selection policy (in the average loss per round sense) as the number of quantization levels increases.
	\item We introduce a novel and generalized notion of hierarchical context space partitioning structures for the contextual bandit setting and use such hierarchical structures to design efficient implementations of our algorithms and achieve a faster convergence rate for the regret compared to the state-of-the-art.
	\item We demonstrate significant performance gains with the proposed algorithms in comparison to the state-of-the-art techniques through extensive experiments involving both synthetic and real data.
\end{itemize}

\subsection{Organization of the Paper}
In Section \ref{ProbDes}, we describe the contextual multi-armed bandit framework. Next, we explain a first mixture of experts based approach and its challenges in Section \ref{Algorithm}. In Section \ref{BT}, we explain the notion of hierarchical structures and implement our algorithm using these structures. We introduce an efficient quantization method in Section \ref{EQ}, and show that our algorithm is competitive against any mapping, including the best arm selection policy, from the context space to the bandit arms. Section \ref{simu} contains the experimental results over several synthetic and well known real life datasets followed by the concluding remarks in Section \ref{Conc}.  \par 

\section{Problem Description}\label{ProbDes}
We study the contextual bandit problem in an adversarial setting\footnote{All vectors are column vectors and denoted by boldface lower case letters. For a $K$-element vector $\vec{u}$, ${u}_i$ represents the $i^{\text{th}}$ element and $\lVert \vec{u} \rVert=\sqrt{\vec{u}^T\vec{u}}$ is the $l^2$-norm, where $\vec{u}^T$ is the transpose. Indicator function $\mathbf{1}_{\lbrace {\cdot} \rbrace}\in \{0,1\}$ outputs $1$ only if its argument condition holds. A function $f:\mathbb{R}^n \to \mathbb{R}$ is Lipschitz continuous over a region $W \subset \mathbb{R}^n$, if there exists a non-negative constant $c$ such that $\lvert f(\vec{x}_1)-f(\vec{x}_2) \rvert \leq c\lVert \vec{x}_1-\vec{x}_2 \rVert$ for all $\vec{x}_1 , \vec{x}_2 \in W$.}. Recall that the original multi-arm bandit problem is a sequential game. One of the available bandit arms $I_t \in \lbrace 1,...,M \rbrace$ is selected at each round $t$ and then a related loss $l_{t,I_t}$ is observed\footnote{We assume $l_{t,I_t} \in [0,1]$ for simplicity, however, it can be straightforwardly shown that our results hold for any bounded loss after shifting and scaling in magnitude.}. The objective is to minimize the accumulated loss $\sum_{t=1}^{T} l_{t,I_t}$ in a sequence of $T$ rounds. In the contextual extension, a context vector $\vec{s}_t$ from a context space $S$ is additionally provided at each round before selecting the arm. For example, $S$ is $[0,1]^2$ in Fig. \ref{fig:1}. Then the objective stays same but can be improved with the available context.

We consider this contextual bandit problem in adversarial setting  without making any statistical assumptions about the context vectors and the bandit arms \cite{Auerthenonstochastic}, and propose algorithms that are guaranteed to work in an individual sequence manner. Our algorithms are strictly sequential such that at each round $t$, they select an arm $I_t$ according to the information coming from the previous rounds including observed context vectors, selected arms and their losses, alongside the context vector we are currently observing, i.e.,
\begin{equation}
I_t=f_t(\vec{s}_t;\vec{s}_{t-1},I_{t-1},l_{t-1,I_{t-1}};...;\vec{s}_{1},I_{1},l_{1,I_{1}}). 
\end{equation}
In design of our algorithms, we aim at sequentially learning the optimal partitioning of the context space with the optimal assignment between the regions of the learned partition and the set of arms. For this purpose, we investigate a general framework of hierarchical structures to generate context space partitions and eventually learn the asymptotically optimal, time varying, context driven arm chooser $f_t$. We show that our approach, compared to the state-of-the-art techniques, yields computationally highly superior algorithms with real time data processing capabilities while achieving a faster convergence rate to the optimal conditions (in terms of the convergence of the regret upper bounds to $0$). The superiority of the proposed algorithms is due to that the set of all possible context space partitions considered here can theoretically achieve arbitrarily high degree of granularity (can be of arbitrarily high capacity) whereas the true complexity of the optimal partition is limited (cf. Section \ref{BT}) in reality. Based on this observation, our approach additionally allows the regret analysis to incorporate an upper bound on the complexity of the optimal partition, which in turn significantly improves the convergence of the presented algorithms in almost all practical scenarios. This gain is essentially from ${O}(\sqrt{N})$ to ${O}(\sqrt{\ln N})$ ($N$ is measuring the granularity, cf. Section \ref{BT}). If the complexity of the optimal partition cannot be upper bounded, which would be a purely theoretical consideration as the true complexity is almost always limited and finite in real scenarios, our regret analysis then produces similar rates of convergence in that very worst theoretical scenario. Nevertheless, in any case, the proposed algorithms are computationally highly efficient and superior, and asymptotically optimal in the adversarial setting including the very worst scenario regardless of the stationary or non-stationary or perhaps chaotic source statistics.

To this end, we consider a large class $\mathcal{G}$ of deterministic mappings, i.e., $\forall g\in\mathcal{G}$, $g:S \to {\lbrace 1,...,M \rbrace}$. Each such mapping is composed of a fixed partition of the context space and an arm is assigned to each partition region. Depending on the partition region that a context $s_t$ falls in, $g$ chooses the assigned arm $g(s_t)$. An example is shown in Fig. \ref{fig:1a} in the case of $2$ dimensional context space $S=[0,1]^2$ with $2$ bandit arms, where $g([0.5, 0.5]^T)=1$. Note that for a given $g\in\mathcal{G}$, all of the other deterministic mappings resulting from all possible arm assignments to the regions of the partition of $g$ are also included in $\mathcal{G}$. Since we work in the adversarial setting and therefore refrain from making any statistical assumptions about the context vectors and the loss of the bandit arms \cite{Auerthenonstochastic}, we next define our performance w.r.t. the optimum (minimum loss) mapping in the ``competition" class $\mathcal{G}$ based on the following regret:
\begin{equation}
\mathcal{R}(T,\mathcal{G}) \triangleq \max_{g \in \mathcal{G}} \mathbb{E} \left\lbrack \sum_{t=1}^{T} l_{t,I_t} - \sum_{t=1}^{T} l_{t,g(\vec{s}_t)} \right\rbrack,
\end{equation}
where the expectation is w.r.t. the internal randomization in our algorithms (the internal randomization here is not related to data statistics). Our goal is to upper bound the regret by a term that depends sublinearly in $T$, and hence asymptotically achieve -at least- the performance of the best $g$ in $\mathcal{G}$ (in the averaged regret per round sense). Achieving this goal is equivalent to achieving the performance of the chooser of the optimal context space partition with the optimal assignment to the arms. Here, optimality of the context space partition should be understood w.r.t. the class $\mathcal{G}$ which is certainly not restrictive, since it can be arbitrarily improved by generalizing (detailing) $\mathcal{G}$ to a desired degree, cf. Section \ref{Algorithm}. \par  

We next construct the class $\mathcal{G}$ and provide a mixture-of-experts based first solution to the introduced problem.

\section{A Contextual Bandit Algorithm Based on Mixture of Experts}\label{Algorithm}
\begin{figure}[!t]
	\centering
	\includegraphics[width=0.4\textwidth]{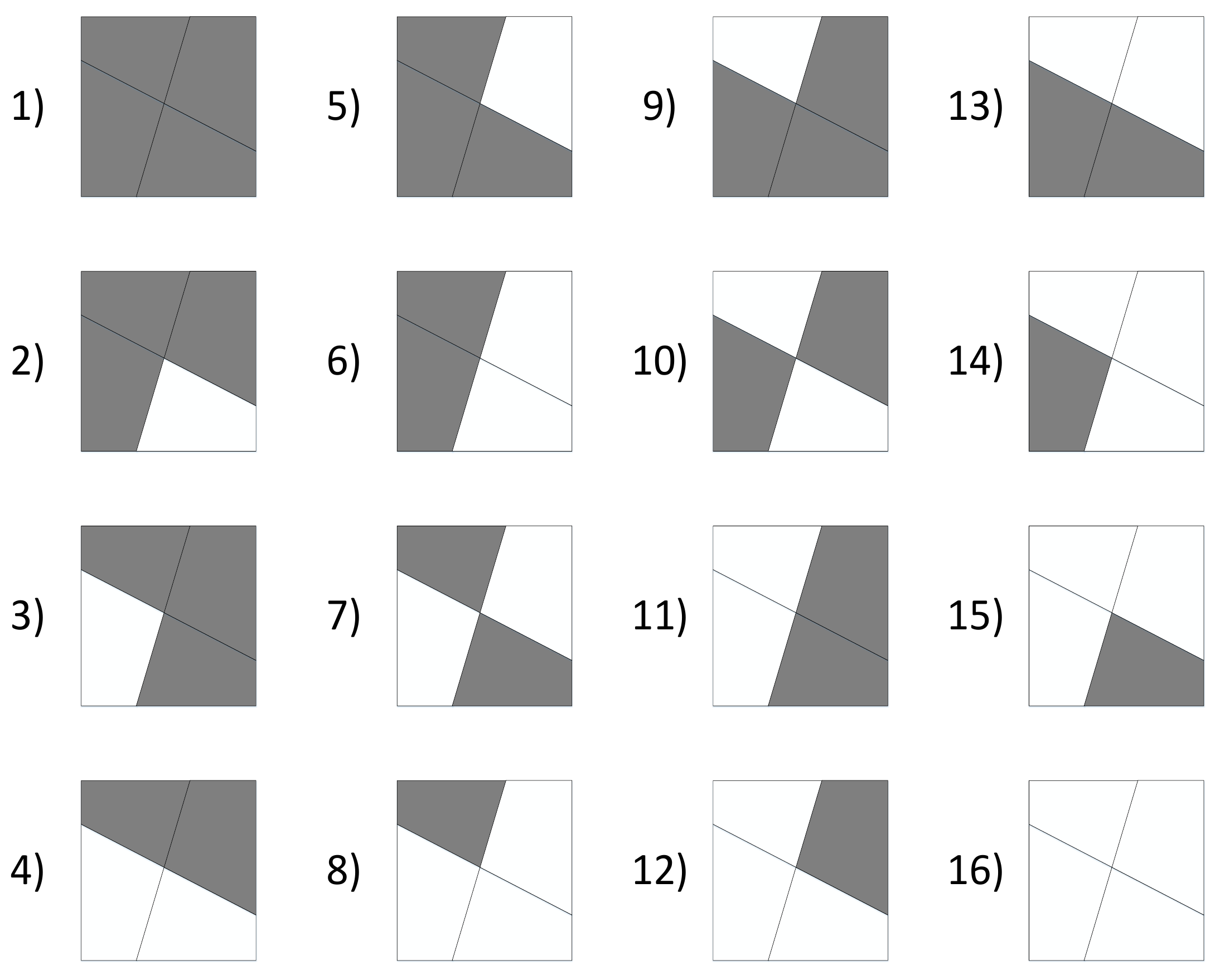}\\
	\caption{All possible mappings in a $2$-armed bandit problem with a predetermined quantization of the context space $S=\lbrack 0,1 \rbrack ^2$ into $4$ regions. In each mapping above, the dark and bright regions are mapped to the arms $1$ and $2$, respectively. }
	\label{allexp}
\end{figure}
The ultimate goal in the contextual bandit problem is ideally to achieve the performance of the best mapping {\em in the set $\mathcal{U}$\footnote{This set $\mathcal{U}$ consists of {\em all possible arbitrary} context space partitions (not confined to $\mathcal{G}$) with all possible assignments of partition regions to the arms.} of all arbitrary mappings} from the context space to the bandit arms. Since this set of all arbitrary mappings is too powerful to compete against in design of an algorithm, as the first step, we uniformly quantize the context space $S$ into $N$ disjoint regions $r_1,r_2,...,r_N$, i.e., $\cup_{i=1}^N r_i = S$ and $r_i \cap r_j = \varnothing$ for $\forall i \neq j$. We use uniform quantization for simplicity, however, one can incorporate any arbitrary type of quantization into our framework straightforwardly. In our framework, we consider all possible assignments between the set of disjoint regions and the set of bandit arms, and call each context mapping resulting from one of those assignments an $N$-level quantized mapping. Therefore, each $N$-level quantized mapping is essentially a function from $\cup_{i=1}^N r_i = S$ to $\lbrace 1,...,M \rbrace$: a context $s\in r^* \subset S$ is mapped to the bandit arm that the region $r^*$ is assigned to. Two examples of such quantized mappings of different levels for the case of $2$-armed bandit with the context space $[0,1]^2$ are shown in Fig. \ref{fig:1b} and Fig. \ref{fig:1c}. Given a quantized context space $S=\cup_{i=1}^N r_i$, we define the class $\mathcal{G}^N$ of $N$-level quantized mappings as the ``competition class" with $N$ quantization levels consisting of all arbitrary assignments between the bandit arms and the given $N$ regions $\{r_i\}_{i=1}^N$.

{\bf Remark: }We seek to achieve the performance of the best quantized mapping in $\mathcal{G}^N$, which can get arbitrarily close (and $N$ can be freely chosen in our framework) to the performance of the {\em best arbitrary mapping in $\mathcal{U}$, i.e., the best arm selection policy,} as $N$ increases. For example, suppose that the mapping shown in Fig. \ref{fig:1a} is the {\em best arbitrary mapping}. In this case, the mappings in Fig. \ref{fig:1b} and Fig. \ref{fig:1c} of improving optimalities will be the best mappings in $\mathcal{G}^{16}$ and $\mathcal{G}^{64}$, respectively.\par 

Based on $M^N$ different mappings in $\mathcal{G}^N$, we consider an expert chooser that is one-to-one-corresponding to each of those mappings such that $g_{j}(s)$ is the arm chosen by expert $E_j$ for the context $s$, i.e., $E_j \leftrightarrow g_j, 1 \leq j \leq M^N$. An example of all $16$ mappings followed by the experts for the case of $M=2$ and $N=4$ is shown in Fig. \ref{allexp}, where, unlike Fig. \ref{fig:1}, we choose a nonuniform quantization to demonstrate the generality in our approach. One of these experts in Fig. \ref{allexp} is $\mathcal{G}^4$-optimal for the underlying sequence of losses, however, naturally, we do not know which. Hence, instead of committing to a single expert, we next use a mixture of experts approach to learn the best one during rounds.\par
In order to achieve the performance of the best expert, we assign each expert $E_j$ a weight $\alpha_{t,j}$ (showing our trust on the expert $E_j$ at round $t$) and use exponentiated weights to adaptively combine them. After observing context $\vec{s}_t$ at each round $t$, we randomly select one of the experts using the probability simplex $\boldsymbol{\beta}_t=(\beta_{t,1},...,\beta_{t,M^N})$, where $\beta_{t,j}=\alpha_{t,j}/{\sum_{k=1}^{M^n} \alpha_{t,k}}$ is the normalized weight. Importantly, the probability of selecting each arm then follows the probability simplex $\vec{p}_t=(p_{t,1},...,p_{t,M})$, where
\begin{equation} \label{mixture}
p_{t,i}={\sum_{j=1}^{M^N} \beta_{t,j} {\mathbf{1}}_{\lbrace g_j(\vec{s}_t)=i \rbrace}}.
\end{equation}
We initially set the weights $\alpha_{1,i}$ according to the complexity of the mappings of experts from $\mathcal{G}^N$, and use exponentiated losses to update during rounds: at each round $t \geq 2$, we have

\begin{equation} \label{expweight}
\alpha_{t,i}=\alpha_{1,i} e^{-\eta\sum_{\tau=1}^{t-1} \tilde{l}_{\tau,g_i(\vec{s}_\tau)}},
\end{equation}
where $\eta \in \mathbb{R}^{+}$ is the (constant) learning rate and $\tilde{l}_{\tau,g_i(\vec{s}_\tau)}$ is the unbiased estimator of ${l}_{\tau,g_i(\vec{s}_\tau)}$. Since we do not observe the loss $l_{t,m}$ of the unchosen arms, we use the unbiased estimator
\begin{equation} \label{unbiased}
\tilde{l}_{t,m}=\begin{cases} 
\frac{{l}_{t,m}}{p_{t,m}} & m = I_t \\
0 & m \neq I_t 
\end{cases},
\end{equation}
where $\mathbb{E}[\tilde{l}_{t,m}]=l_{t,m}$. Using this bandit arm selection probability assignment defined through ~\eqref{mixture}, ~\eqref{expweight} and ~\eqref{unbiased}, we have the following regret result.
\begin{theorem}\label{thm:general}
	Consider an $M$-armed contextual bandit problem. If the context space is quantized into $N$ disjoint regions, and experts $E_j$'s are following the $M^N$ possible mappings in $\mathcal{G}^N$ as described in Section \ref{Algorithm}, then $\mathcal{R}(T,E_j)$ satisfies
	\begin{equation} \label{exp4reg}
	\mathcal{R}(T,E_j) \leq \frac{\ln{(1/{\beta_{1,j}})}}{\eta}+\frac{MT\eta}{2}
	\end{equation}	
	based on the probability assignments defined through ~\eqref{mixture}, ~\eqref{expweight} and ~\eqref{unbiased}, where $T$ is the number of rounds, $\eta \in \mathbb{R}^{+}$ is the learning rate parameter in \eqref{expweight} and $\beta_{1,j}$ is the normalized initial weight of the $j^{\text{th}}$ expert $E_j$.
\end{theorem}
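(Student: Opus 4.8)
The plan is to run the standard exponential-weights potential-function analysis (as used for EXP3/EXP4), specialized to the expert-to-arm reduction encoded in \eqref{mixture}. First I would introduce the unnormalized total weight $W_t=\sum_{j=1}^{M^N}\alpha_{t,j}$, so that $\beta_{t,j}=\alpha_{t,j}/W_t$, and bound the telescoping quantity $\ln(W_{T+1}/W_1)$ from two directions. The whole argument reduces to sandwiching this single scalar.

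For the lower bound I would keep only the term of the fixed competitor $E_j$: since $W_{T+1}\geq \alpha_{T+1,j}=\alpha_{1,j}\exp\bigl(-\eta\sum_{t=1}^{T}\tilde{l}_{t,g_j(\vec{s}_t)}\bigr)$ by \eqref{expweight}, dividing by $W_1$ and taking logarithms gives $\ln(W_{T+1}/W_1)\geq \ln\beta_{1,j}-\eta\sum_{t=1}^{T}\tilde{l}_{t,g_j(\vec{s}_t)}$. For the upper bound I would write the per-round ratio $W_{t+1}/W_t=\sum_j\beta_{t,j}\exp(-\eta\tilde{l}_{t,g_j(\vec{s}_t)})$, apply $e^{-x}\leq 1-x+x^2/2$ (legitimate because the estimator in \eqref{unbiased} is nonnegative), and then collapse the expert-space sums into arm-space sums through the identity $\sum_j\beta_{t,j}\mathbf{1}_{\{g_j(\vec{s}_t)=i\}}=p_{t,i}$ of \eqref{mixture}. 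This collapse is the technical heart: the first-order term becomes $\sum_i p_{t,i}\tilde{l}_{t,i}=l_{t,I_t}$ and the second-order term becomes $\sum_i p_{t,i}\tilde{l}_{t,i}^2=l_{t,I_t}^2/p_{t,I_t}$, both simplifying because the estimator is supported on the single pulled arm $I_t$. Using $\ln(1+u)\leq u$ and summing over $t$ then yields $\ln(W_{T+1}/W_1)\leq -\eta\sum_{t=1}^{T}l_{t,I_t}+\tfrac{\eta^2}{2}\sum_{t=1}^{T} l_{t,I_t}^2/p_{t,I_t}$.

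Finally I would chain the two bounds, divide by $\eta$, and pass to expectation. The unbiasedness $\mathbb{E}[\tilde{l}_{t,m}]=l_{t,m}$ turns the competitor's estimated cumulative loss into its true loss $\sum_t l_{t,g_j(\vec{s}_t)}$, and a conditional-expectation argument controls the variance term: conditioning on the history that determines $p_{t,\cdot}$, one has $\mathbb{E}[l_{t,I_t}^2/p_{t,I_t}]=\sum_i l_{t,i}^2\leq M$ since $l_{t,i}\in[0,1]$. Summed over the $T$ rounds this contributes exactly $MT\eta/2$, producing \eqref{exp4reg}.

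I expect the main obstacle to be bookkeeping the conditioning rather than any single inequality: the expert-to-arm collapse via \eqref{mixture} must be carried out \emph{inside} the log-ratio before expectation is taken, and when the bounded-variance step is invoked $p_{t,\cdot}$ must be treated as fixed (measurable w.r.t.\ the past) so that the tower property cleanly gives the $\sum_i l_{t,i}^2\leq M$ estimate. Getting this filtration argument right, and confirming that the competitor loss is deterministic under the same conditioning, is the only delicate part; the rest is the routine EXP4 computation.
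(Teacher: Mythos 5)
Your proposal is correct and follows essentially the same route as the paper's proof in Appendix A: the paper also telescopes the log of the total weight (written there as a per-round decomposition of $l_{t,I_t}$ into a second-order term bounded via $e^{-x}-1+x\le x^2/2$ and a telescoping log-ratio of weight sums), lower-bounds the final weight by the competitor's term to produce $\ln(1/\beta_{1,j})/\eta$, and finishes with $\mathbb{E}[\tilde{l}]=l$ and $\mathbb{E}[1/p_{t,I_t}]=M$. The only cosmetic difference is that the paper applies $l_{t,I_t}\le 1$ before taking the expectation of the variance term, whereas you bound $\mathbb{E}[l_{t,I_t}^2/p_{t,I_t}]=\sum_i l_{t,i}^2\le M$ directly; both yield the same $MT\eta/2$ term.
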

{\em Proof of Theorem 1} follows similar lines to the proof of Theorem 4.2 in \cite{Cesa} with certain variations due to our arbitrary initial weighting as opposed to uniform initial weights of the experts in \cite{Cesa}. The proof of our Theorem 1 is provided in Appendix \ref{app:1}.

\begin{figure}[t]
	\centering
	\includegraphics[width=0.45\textwidth]{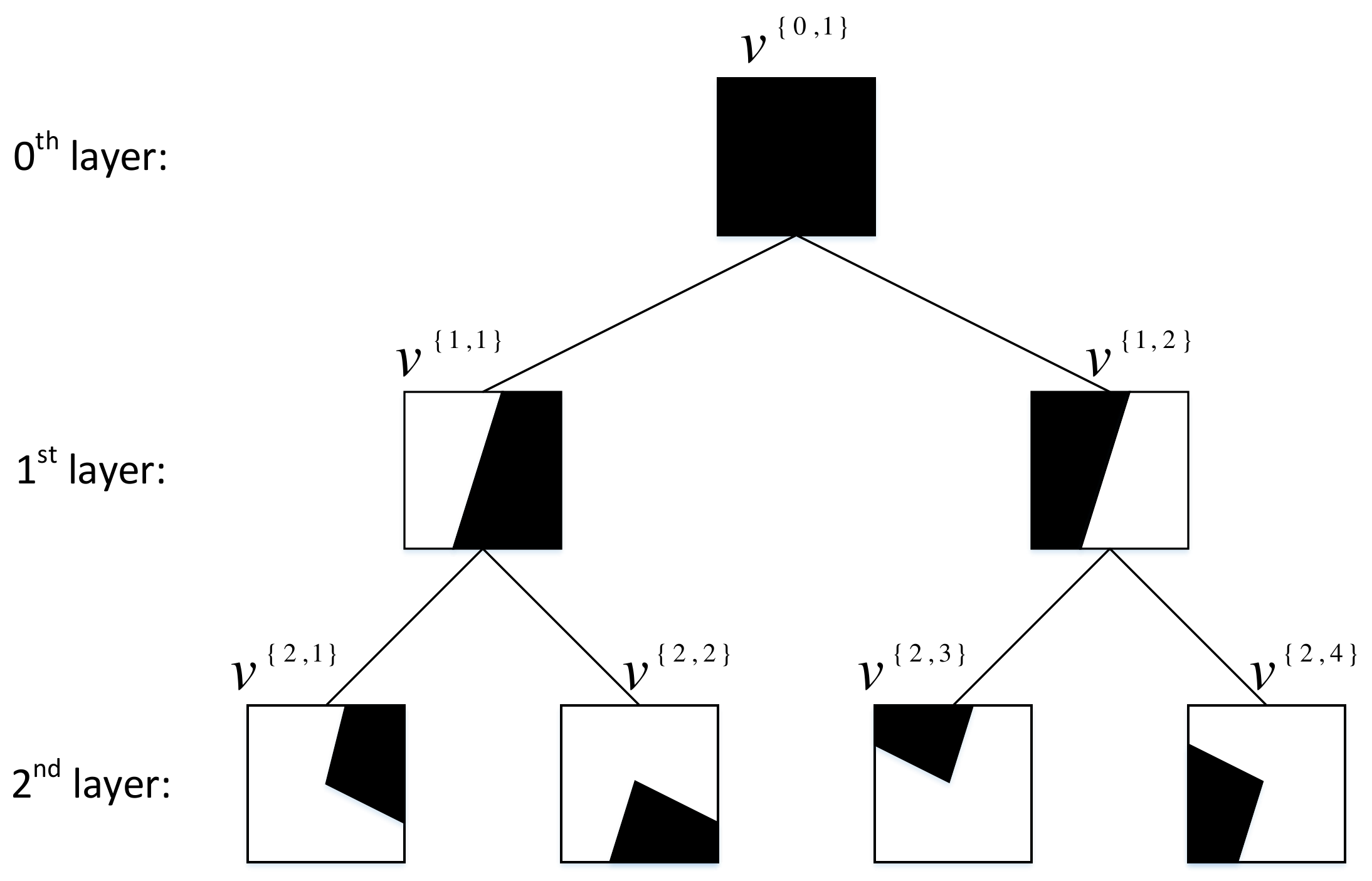}\\
	\caption{A binary tree of depth $D=2$ over the context space $\lbrack 0,1 \rbrack^2$. The regions corresponding to each node are filled with black color.}
	\label{binarytree}
\end{figure}
\begin{figure*}[t]
	\centering
	\includegraphics[width=0.70\textwidth]{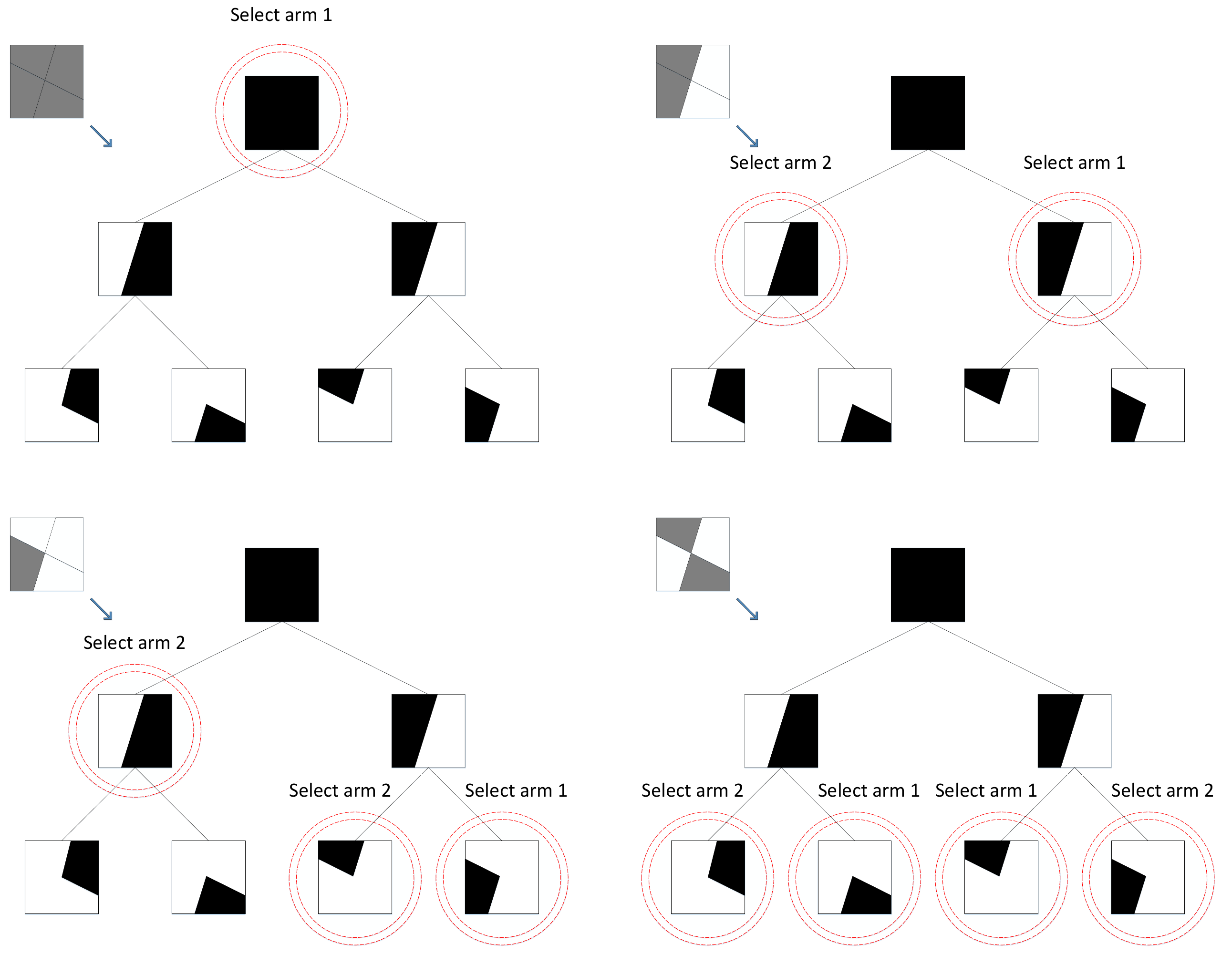}\\
	\caption{Representation of $4$ sample mappings in Fig. \ref{allexp} over the binary tree in Fig. \ref{binarytree}.}
	\label{exontree}
\end{figure*}

We observe that the regret bound is logarithmically dependent on the reciprocal of the prior weight of the optimal partitioning in the competition class (i.e., its complexity cost). Hence, by using equal prior weights on the $M^N$ experts, our regret bound will be in the order\footnote{For ease of exposition and simplicity in our order notation here, we drop the variables, on which the dependency of order is similar or same or negligible across the compared algorithms.} of $O(\sqrt{NT})$ (after optimizing the learning rate).
We point out that this result is similar to the \textit{EXP4} algorithm \cite{Cesa}, which achieves a regret upper bound of $O(\sqrt{NT})$ with optimum selection of the learning rate. Furthermore, \textit{S-EXP3} algorithm \cite{Cesa} achieves a regret upper bound of the same order $O(\sqrt{NT})$ using an independent \textit{EXP3} algorithm over each quantized region of the context space.  This square root dependency of the regret bound on the quantization level is prohibitive and working against our motivation of approximating the performance of {\em the best arbitrary mapping} by freely increasing the number of quantization levels. Instead, we would like our regret bound to be dependent on the actual number $R$ of disjoint regions that is needed and sufficient to model the actual complexity of the best arbitrary mapping whatever the quantization level $N$ is. Hence, we want to achieve the order $O(\sqrt{RT})$. 
Moreover, working with these $M^N$ parameters $\alpha_{t,1},...,\alpha_{t,M^N}$ has quite high space and computational complexities of $O(M^N)$. 

To this end, we introduce hierarchical structures to generate context space partitions and exploit the level of complexity that is sufficient to model the best mapping over the introduced hierarchy. Thus, we achieve a regret upper bound with square-root dependency on the actual number of regions $R$ in a computationally highly superior manner with significantly low space complexity.

\section{Hierarchical Structures}\label{BT}
We use hierarchical structures to implement our contextual bandit algorithm {\em efficiently} in terms of both the regret upper bound convergence to $0$ in average loss per round sense as well as computational and space complexities. Suppose that we have $H$ nodes in a hierarchical structure labeled $v_i$, $i \in \lbrace 1,2,...,H \rbrace$. We assign each node $v_i$ a region $r_i$ from the context space and there is hierarchical connection from each parent node to its child nodes. Let $\Phi_i$ be the set of child node groups of the node $v_i$, where each group $\phi \in \Phi_i$ consists of child nodes such that the union of their corresponding regions gives the region associated with the parent node $v_i$. \par 
For instance, consider the binary tree of depth $2$ in Fig. \ref{binarytree}, which quantizes the $2$-dimensional context space $S=[0,1]^2$. Each node of such binary tree corresponds to a region of the context space, as shown in the figure. The region corresponding to each node is the union of the regions of its child nodes. Hence, for each node $v_i$ in this tree (except for the leaf nodes), the set $\Phi_i$ is of size $1$, which consists of only one group of cardinality $2$ (which is the parent node's child pair). For the leaf nodes, $\Phi_i$ is the empty set and, hence, has a size of $0$.\par 
Next, we use this hierarchical structure to compactly represent our experts and combine them in an efficient manner.

\subsection{A Weighted Mixture of Experts Algorithm Using Hierarchical Structures}\label{alg}

In the following, we explain the details of our efficient implementation of the mixture of experts algorithm (described in Section \ref{Algorithm}) by using hierarchical structures and present several examples. In addition to achieving computational scalability in our implementation, another goal of our work is to incorporate the model complexity of the best expert to improve the upper bound on the regret. \par 


Here, each expert is composed of a partition of the context space and an arm assigned to each partition region. The partition corresponding to each expert can be represented using several nodes of the hierarchical structure. Hence, each expert can be represented using several nodes (showing the partition) and an arm corresponding to each one of them (showing the arm assignments). As an example, consider a $2$-armed bandit problem. Suppose that we use a binary tree of depth $2$ to quantize the context space into $4$ regions. In this case, we define $2^4=16$ experts as in Fig. \ref{allexp}. We represent $4$ samples among these $16$ experts on our binary tree in Fig. \ref{exontree}. In this figure, the nodes representing the partition corresponding to the experts are marked using the circles and the arm selected by the expert at each one of these nodes is declared over the node. We seek to adaptively combine all of the experts to achieve the performance of the best one as explained in Section \ref{Algorithm}. \par 
In order to implement our mixture of experts, over each node $v_i$, we define $M$ parameters $\alpha_{t,m,i}$ for $m=1$ to $M$ as the weight of $m^{\text{th}}$ arm in the node $v_i$. This weight shows our trust on the $m^{\text{th}}$ arm when the context vector falls into the region corresponding to the node $v_i$. We set $\alpha_{1,m,i}=1$ for all $m$'s and $v_i$'s, and for $t \geq 2$,
%
\begin{equation} \label{eq:alphadef}
\alpha_{t,m,i}=\exp{\left( -\eta \sum_{\tau=1}^{t-1} \frac{l_{I_\tau}}{p_{\tau,m}} \mathbf{1}_{\left\lbrace I_\tau=m\right\rbrace} \mathbf{1}_{\left\lbrace \vec{s}_\tau \in r_i \right\rbrace} \right)}.
\end{equation}
We can easily update these weights as follows. At each round $t$, after we receive $\vec{s}_t$, calculate $\vec{p}_t$, select $I_t^{\text{th}}$ arm and observe the loss $l_{t,I_t}$, we calculate
\begin{equation} \label{eq:alphaupdate}
\alpha_{t+1,m,i}=\alpha_{t,m,i} \exp{\left( -\eta \frac{l_{I_t}}{p_{t,m}} \mathbf{1}_{\left\lbrace I_t=m\right\rbrace} \mathbf{1}_{\left\lbrace \vec{s}_t \in r_i \right\rbrace}\right)}.
\end{equation}

We point out that the weight of each expert $\alpha_{t,k}$ in \eqref{expweight} can be written as a multiplication of its initial weight and our weight parameters (i.e. $\alpha_{t,m,i}$'s) on the tree nodes corresponding to the mapping followed by the expert. 
To this end, in order to obtain the expert weights (cf. Theorem 2), we define another variable $w_{t,i}$ over each node $v_i$ such that
\begin{equation}\label{eq:wrecursion}
w_{t,i}= \frac{1}{(| \Phi_i |+1)M}\sum_{m=1}^{M}\alpha_{t,m,i}+
\frac{1}{| \Phi_i |+1}\sum_{\phi \in \Phi_i}^{}
\left(
\prod_{j \in \phi} w_{t,j}
\right).
\end{equation}
Hence, if $\Phi_i$ is the empty set (i.e. $| \Phi_i |=0$), then the equation simply becomes
\begin{equation}
w_{t,i}= \frac{1}{M}\sum_{m=1}^{M}\alpha_{t,m,i}.
\end{equation}
The following proposition shows that using this recursion to calculate $w_{t,i}$ variables, the weight of the root node $w_{t,1}$ becomes equal to the sum of the expert weights, i.e., $\sum_{k} \alpha_{t,k}$ (as defined in \eqref{expweight}).
\begin{proposition}\label{th:2}
	Using the recursive formula in \eqref{eq:wrecursion}, at each node $v_i$, we have
	\begin{equation}
	w_{t,i}=\sum_{k \in \Gamma_i}^{} \alpha_{t,k},
	\end{equation}
	where $\Gamma_i$ is the set of all experts defined over node $v_i$.
\end{proposition}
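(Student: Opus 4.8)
The plan is to prove the identity by structural induction on the nodes of the hierarchy, proceeding from the leaves upward to the root, which is the direction dictated by the recursion \eqref{eq:wrecursion}: the value $w_{t,i}$ is defined through the weights $w_{t,j}$ of the child nodes $j \in \phi$. Throughout, I would lean on the multiplicative factorization already noted in the text, namely that $\alpha_{t,k}=\alpha_{1,k}\prod \alpha_{t,m,i}$, where the product runs over the (arm, node) pairs that make up expert $k$'s mapping. This factorization holds because each context $\vec{s}_\tau$ lands in exactly one region of the partition of expert $k$, so the exponentiated-loss part of \eqref{expweight} splits across the regions of that partition, and, with $\alpha_{1,m,i}=1$, each regional factor is precisely one of the node weights $\alpha_{t,m,i}$. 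The initial weight $\alpha_{1,k}$ is then the product of the prefactors $\tfrac{1}{(|\Phi_i|+1)M}$ and $\tfrac{1}{|\Phi_i|+1}$ accrued along the construction of that partition.

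For the base case, let $v_i$ be a leaf, so $\Phi_i=\varnothing$ and $|\Phi_i|=0$. The only experts in $\Gamma_i$ are the $M$ experts that keep $r_i$ as a single region and assign it one of the $M$ arms; the one assigning arm $m$ has weight $\tfrac{1}{M}\alpha_{t,m,i}$, and summing over $m$ recovers $w_{t,i}=\tfrac{1}{M}\sum_{m=1}^{M}\alpha_{t,m,i}$, in agreement with \eqref{eq:wrecursion}.

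For the inductive step, I would partition $\Gamma_i$ into disjoint families: the $M$ \textbf{no-split} experts that treat $r_i$ as a single region, and, for each group $\phi \in \Phi_i$, the family $\Gamma_i^{(\phi)}$ of experts that first split $r_i$ according to $\phi$ and then partition every child region $r_j$, $j\in\phi$, by some expert drawn from $\Gamma_j$. The no-split experts contribute $\tfrac{1}{(|\Phi_i|+1)M}\sum_{m}\alpha_{t,m,i}$, which is the first term of \eqref{eq:wrecursion}. Fixing a group $\phi$, every member of $\Gamma_i^{(\phi)}$ corresponds to an independent choice of a sub-expert $k_j\in\Gamma_j$ for each $j\in\phi$, and its weight equals $\tfrac{1}{|\Phi_i|+1}\prod_{j\in\phi}\alpha_{t,k_j}$. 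Summing over all such tuples and invoking distributivity converts the sum of products into a product of sums, $\tfrac{1}{|\Phi_i|+1}\prod_{j\in\phi}\big(\sum_{k_j\in\Gamma_j}\alpha_{t,k_j}\big)$, which by the induction hypothesis equals $\tfrac{1}{|\Phi_i|+1}\prod_{j\in\phi}w_{t,j}$. Summing over $\phi\in\Phi_i$ yields the second term of \eqref{eq:wrecursion}, completing the step.

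The main obstacle I anticipate is the bookkeeping in the distributive step rather than any hard estimate: one must verify the combinatorial identification of $\Gamma_i^{(\phi)}$ with the Cartesian product $\prod_{j\in\phi}\Gamma_j$ of sub-experts, so that expanding the product of sums enumerates each composite expert exactly once; confirm that the families $\Gamma_i^{(\phi)}$ for distinct $\phi$, together with the no-split experts, are pairwise disjoint and exhaust $\Gamma_i$, ruling out double counting or omission; and check that the prefactor $\tfrac{1}{|\Phi_i|+1}$ is shared by exactly the experts within a single family, so that the accumulated initial weights $\alpha_{1,k}$ multiply consistently with the claimed factorization of $\alpha_{t,k}$.
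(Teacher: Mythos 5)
Your proof follows essentially the same route as the paper's: induction from the leaves upward, the same decomposition of $\Gamma_i$ into the $M$ no-split experts plus one family per child-node group $\phi\in\Phi_i$, and the same distributivity step turning $\prod_{j\in\phi}\bigl(\sum_{k_j\in\Gamma_j}\alpha_{t,k_j}\bigr)$ into a sum over tuples of sub-experts. The one place where your plan diverges from what actually happens is the disjointness check you flag at the end: the paper explicitly notes that the subsets $\Gamma_i^o$ and $\Gamma_i^{\phi}$ are \emph{not} pairwise disjoint as sets of mappings --- the constant-arm-$m$ expert lies in $\Gamma_i^o$ and also in every $\Gamma_i^{\phi}$, realized there as the split into $\phi$ with the constant-arm-$m$ sub-expert on each child. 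So if $\Gamma_i$ is the set of distinct mappings (which is how the competition class $\mathcal{G}^N$ is defined, with $M^N$ elements), your ``partition into disjoint families'' fails and the term-by-term sum would appear to double count. The paper's resolution is to let the initial weight of such an expert be the \emph{sum} of the prefactors accrued along each of its hierarchical representations, i.e.
\begin{equation*}
\alpha_{1,k}=\frac{1}{(|\Phi_i|+1)M}\mathbf{1}_{\lbrace k\in\Gamma_i^o\rbrace}+\frac{1}{|\Phi_i|+1}\sum_{\phi\in\Phi_i}\mathbf{1}_{\lbrace k=\vec{k}_{\phi}\rbrace}\prod_{j\in\phi}\alpha_{1,\vec{k}_{\phi}(j)},
\end{equation*}
after which the algebra you wrote goes through verbatim. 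Equivalently, you could preserve your disjointness by declaring experts to be hierarchical representations rather than mappings (so that some mappings appear in $\Gamma_i$ with multiplicity); either convention yields the same $w_{t,i}$ and is compatible with Theorem 1. But as written, the assertion that the families are pairwise disjoint and that this ``rules out double counting'' is the one step that would not survive the verification you yourself propose, and it needs the above fix rather than a confirmation.
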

{\em Proof of Proposition 1} is provided in Appendix \ref{app:2}. \par
Now, in order to calculate the probability simplex in \eqref{mixture}, we define $M$ other variables to calculate $\sum_{k} \alpha_{t,k} \mathbf{1}_{\lbrace g_k(\vec{s}_t)=i \rbrace}$ for $i=1,...,M$. To this end, after we observe $\vec{s}_t$, we set
\begin{equation}
\gamma_{t,m,i}=\frac{1}{M}\alpha_{t,m,i},
\end{equation}
at the nodes $v_i$ containing $\vec{s}_t$, where $| \Phi_i |=0$ (i.e., leaf nodes). Then, we go up on the hierarchy using a recursive formula similar to the way we calculate $w_{t,i}$ variables in \eqref{eq:wrecursion} as
\begin{align} \label{eq:gammarecursion}
\gamma_{t,m,i}=& \frac{1}{(| \Phi_i |+1)M}\alpha_{t,m,i}\nonumber\\
+&\frac{1}{| \Phi_i |+1}\sum_{\phi \in \Phi_i}^{}
\left(
\prod_{j \in \phi} w_{t,j}
\left(
\frac{\gamma_{t,m,j}}{w_{t,j}}
\right)^{\mathbf{1}_{\lbrace \vec{s}_t \in r_j \rbrace}}
\right).
\end{align}
Using this recursion, we calculate $\gamma_{t,m,1}$ for $m=1,...,M$. The following proposition shows that using this recursion, $\gamma_{t,m,1}$ is the weighted sum of all experts, which select the $m^{\text{th}}$ arm when they observe $\vec{s}_t$. Hence, we can build the probability simplex in \eqref{mixture} as
\begin{equation}
p_{t,m}=\gamma_{t,m,1}/w_{t,1}, \forall m \in \lbrace 1,...,M \rbrace.
\end{equation}

\begin{proposition}
	Using the recursive formula in \eqref{eq:gammarecursion}, at each node $v_i$, for all $m \in \lbrace 1,...,M \rbrace$, we have
	\begin{equation} \label{eq:th3}
	\gamma_{t,m,i}=\sum_{k \in \Gamma_i}^{} \alpha_{t,k} \mathbf{1}_{\lbrace g_k(\vec{s}_{t})=m \rbrace},
	\end{equation}
	where $\Gamma_i$ is the set of all experts defined over node $v_i$.
\end{proposition}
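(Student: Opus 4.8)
\emph{Proof proposal.} The plan is to establish \eqref{eq:th3} by structural induction on the hierarchical structure, propagating from the leaf nodes upward to the root exactly as the recursion \eqref{eq:gammarecursion} does, and to borrow Proposition \ref{th:2} to handle the sibling factors. Throughout I would only consider the nodes $v_i$ whose region $r_i$ contains the current context $\vec{s}_t$, since these are precisely the nodes at which $\gamma_{t,m,i}$ is instantiated: for any child $j$ with $\vec{s}_t \notin r_j$ the indicator exponent $\mathbf{1}_{\lbrace \vec{s}_t \in r_j \rbrace}$ in \eqref{eq:gammarecursion} is $0$, so the corresponding $\gamma$-value is never read. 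The combinatorial backbone of the argument is the same recursive description of $\Gamma_i$ that underlies Proposition \ref{th:2}: an expert over $v_i$ either assigns a single arm $m'$ to the whole region $r_i$ (there are $M$ such experts, each carrying the prior factor $\tfrac{1}{(|\Phi_i|+1)M}$), or it selects one child group $\phi \in \Phi_i$ and composes an arbitrary sub-expert $k_j \in \Gamma_j$ on each child $j \in \phi$, the resulting composite weight factorizing as $\tfrac{1}{|\Phi_i|+1}\prod_{j \in \phi}\alpha_{t,k_j}$.

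For the base case, at a leaf $v_i$ with $|\Phi_i|=0$ containing $\vec{s}_t$, the only experts are the $M$ single-arm assignments, so $\sum_{k \in \Gamma_i}\alpha_{t,k}\mathbf{1}_{\lbrace g_k(\vec{s}_t)=m \rbrace}$ collapses to the lone term $\tfrac{1}{M}\alpha_{t,m,i}$, matching the initialization. For the inductive step I would fix an internal node $v_i$ with $\vec{s}_t \in r_i$, assume \eqref{eq:th3} for all of its children, and split the target sum along the decomposition above. The stop-experts contribute only through the one assigning arm $m$ to $r_i$, giving $\tfrac{1}{(|\Phi_i|+1)M}\alpha_{t,m,i}$, which is the first term of \eqref{eq:gammarecursion}. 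For the split-experts of a fixed group $\phi$, since the children in $\phi$ partition $r_i$, exactly one child $j^{\ast} \in \phi$ has $\vec{s}_t \in r_{j^{\ast}}$; a composite expert agrees with its sub-expert on $v_{j^{\ast}}$ at $\vec{s}_t$, while the sub-experts on the other children are unconstrained. Hence the sum over such composites selecting arm $m$ factorizes as
\[
\frac{1}{|\Phi_i|+1}\left(\sum_{k_{j^{\ast}}\in\Gamma_{j^{\ast}}}\alpha_{t,k_{j^{\ast}}}\mathbf{1}_{\lbrace g_{k_{j^{\ast}}}(\vec{s}_t)=m \rbrace}\right)\prod_{j\in\phi,\,j\neq j^{\ast}}\left(\sum_{k_j\in\Gamma_j}\alpha_{t,k_j}\right),
\]
where, by the inductive hypothesis, the first parenthesis equals $\gamma_{t,m,j^{\ast}}$ and, by Proposition \ref{th:2}, each remaining factor equals $w_{t,j}$.

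The one genuinely delicate step—and the point I expect to be the crux—is verifying that this factorized expression coincides term-by-term with the summand $\prod_{j\in\phi} w_{t,j}\left(\gamma_{t,m,j}/w_{t,j}\right)^{\mathbf{1}_{\lbrace \vec{s}_t \in r_j \rbrace}}$ in \eqref{eq:gammarecursion}. This holds because the indicator exponent routes the $\gamma$-bookkeeping to the unique child hit by $\vec{s}_t$: for $j=j^{\ast}$ the factor is $w_{t,j^{\ast}}(\gamma_{t,m,j^{\ast}}/w_{t,j^{\ast}})=\gamma_{t,m,j^{\ast}}$, whereas for every other $j\in\phi$ the exponent is $0$ and the factor reduces to $w_{t,j}$. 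Summing over all $\phi\in\Phi_i$ reproduces the second term of \eqref{eq:gammarecursion}, and adding the stop-expert contribution completes the induction. In effect, the whole argument is the arm-selecting analogue of Proposition \ref{th:2}: the $w$-recursion of \eqref{eq:wrecursion} aggregates the total weight of a subtree's experts, while the $\gamma$-recursion tracks only those whose decision on $\vec{s}_t$ equals arm $m$, and the indicator exponent is exactly the device that replaces $w$ by $\gamma$ along the single branch the context follows.
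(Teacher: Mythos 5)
Your proof is correct, and it reaches \eqref{eq:th3} by a recognizably different route than the paper. You run a direct structural induction up the hierarchy: decompose $\Gamma_i$ into the $M$ stop-experts and the split-experts $\Gamma_i^{\phi}$, factor the sum over each $\Gamma_i^{\phi}$ into the on-path child $j^{\ast}$ (handled by the inductive hypothesis, yielding $\gamma_{t,m,j^{\ast}}$) and the off-path siblings (handled by Proposition \ref{th:2}, yielding $w_{t,j}$), and then check term-by-term that this matches the indicator-exponent form of \eqref{eq:gammarecursion}. The paper instead proves the statement by reduction: it defines masked weights $\tilde{\alpha}_{t,m,i}$ that vanish whenever $\vec{s}_t\in r_i$ and $m\neq m^{\ast}$, observes that the $\gamma$-recursion \eqref{eq:gammarecursion} is exactly the $w$-recursion \eqref{eq:wrecursion} applied to these masked weights, and then invokes the proof of Proposition \ref{th:2} verbatim to conclude $\gamma_{t,m^{\ast},i}=\sum_{k\in\Gamma_i}\tilde{\alpha}_{t,k}$ with $\tilde{\alpha}_{t,k}=\alpha_{t,k}\mathbf{1}_{\lbrace g_k(\vec{s}_t)=m^{\ast}\rbrace}$. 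The two arguments rest on the same combinatorial decomposition of $\Gamma_i$ and the same observation that off-path subtrees contribute their full weight $w_{t,j}$; the paper's masking device buys brevity by not repeating the induction, while your explicit version surfaces the one step the paper leaves implicit, namely that the exponent $\mathbf{1}_{\lbrace \vec{s}_t\in r_j\rbrace}$ is precisely what swaps $w_{t,j}$ for $\gamma_{t,m,j}$ along the single branch the context follows. Either write-up is acceptable; if you keep yours, it would be worth adding (as the paper does in the proof of Proposition \ref{th:2}) a remark that the decomposition $\Gamma_i^{o}\cup\bigl(\bigcup_{\phi\in\Phi_i}\Gamma_i^{\phi}\bigr)$ is not disjoint, so the sum over "experts" is really a sum over representations with the prior weights accounting for the multiplicity.
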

{\em Proof of Proposition 2} is provided in Appendix \ref{app:3}. \par 

With the proposed implementation of the algorithm, at each round $t$, after observing $\vec{s}_t$, we first calculate $\gamma_{t,m,1}$ for $m=1,...,M$ and then divide by $w_{t,1}$ to form the probability simplex $\vec{p}_t=(p_{t,1},...,p_{t,m})$, using which we select an arm $I_t$. After we select our arm and suffer the loss according to the selected arm, we first update $\alpha_{t,I_t,i}$ parameters at the nodes containing $\vec{s}_t$. Then, we update $w_{t,i}$ variables at these affected nodes and go to the next round.
The pseudo code of the explained procedure is provided in Algorithm \ref{tab:table1}. \par 
\begin{algorithm}[t]
	\caption{Hierarchical Structure based Bandits (\textit{HSB})}
	\label{tab:table1}
	\begin{algorithmic}[1]
		\algsetup{linenosize=\tiny}
		\STATE \textbf{Parameter:}
		\STATE Set constant $\eta \in \mathbb{R}^{+}$
		\STATE \textbf{Initialization:}
		\STATE Initialize the structure including nodes $v_i$, the regions $r_i$ and the hierarchical relations $\Phi_i$.
		\STATE Initialize $\alpha_{1,m,i}=1$ for all $m,i$.
		\STATE Initialize $w_{1,i}$ for all $i$ using \eqref{eq:wrecursion}
		\STATE \textbf{Algorithm:}
		\FOR {$t=1$ \TO $T$ }
		\STATE Observe $\vec{s}_t$
		\FOR {$m=1$ \TO $M$}
		\STATE Calculate $\gamma_{t,m,i}$ according to \eqref{eq:gammarecursion}
		\ENDFOR
		\FOR {$m=1$ \TO $M$}
		\STATE $p_{t,m} = \gamma_{t,m,1} / w_{t,1}$
		\ENDFOR
		\STATE Select a random arm $I_t$ according to the probability simplex $\vec{p}_{t}=(p_{t,1},...,p_{t,M})$
		\STATE Set $\alpha_{t+1,m,i}=\alpha_{t,m,i}$ for all $m,i$
		\STATE Set $w_{t+1,i}=w_{t,i}$ for all $i$
		\FOR {the nodes $v_i$, where $\vec{s}_t \in r_i$}
		\STATE Calculate $\alpha_{t+1,I_t,i}$ according to \eqref{eq:alphaupdate}
		\ENDFOR
		\FOR {the nodes $v_i$, where $\vec{s}_t \in r_i$}
		\STATE Calculate $w_{t+1,i}$ using \eqref{eq:wrecursion}
		\ENDFOR
		\ENDFOR
	\end{algorithmic} 	
\end{algorithm} 
Next, we show the regret bound of our hierarchical structure algorithm.
\begin{theorem}
	Algorithm \ref{tab:table1} achieves the regret bound
	\begin{equation}
	\mathcal{R}(T,\mathcal{G}^N)\leq 
	\frac{\Psi(A_R+1) \ln((H_S+1)M)}{\eta}
	+\frac{MT\eta}{2},
	\end{equation}
	where $\Psi$ is an upper bound on the cardinality of the child node groups $\phi$, i.e., $\Psi \geq |\phi|$ for all $\phi$, $H_S$ is an upper bound on the cardinality of $\Phi_i$, i.e., $H_S \geq |\Phi_i|$ for all $i$, and $A_R$ is an upper bound on the minimum number of splittings needed in the hierarchical structure to model the optimal partition with $R$ disjoint regions.
\end{theorem}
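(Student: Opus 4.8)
The plan is to reduce the theorem to Theorem~\ref{thm:general} by showing that Algorithm~\ref{tab:table1} implements \emph{exactly} the mixture of experts of Section~\ref{Algorithm}, and then to lower bound the implicit initial weight $\beta_{1,j}$ of the $\mathcal{G}^N$-optimal expert. First I would invoke Proposition~\ref{th:2} and Proposition~2: since $w_{t,1}=\sum_k \alpha_{t,k}$ and $\gamma_{t,m,1}=\sum_k \alpha_{t,k}\mathbf{1}_{\{g_k(\vec{s}_t)=m\}}$, the simplex $p_{t,m}=\gamma_{t,m,1}/w_{t,1}$ coincides with the mixture~\eqref{mixture}, provided each expert weight $\alpha_{t,k}$ factors as the product of the node parameters $\alpha_{t,m,i}$ along the representation of expert $k$ times a \emph{prior} factor $\beta_{1,k}$ read off from the normalizing constants in~\eqref{eq:wrecursion}. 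The crucial observation is that these constants --- the weight $\tfrac{1}{(|\Phi_i|+1)M}$ attached to stopping at $v_i$ and assigning an arm, and the weight $\tfrac{1}{|\Phi_i|+1}$ attached to each split group $\phi\in\Phi_i$ --- define a genuine probability distribution over experts. A quick induction on the hierarchy (using $\alpha_{1,m,i}=1$) gives $w_{1,i}=1$ at every node, hence $w_{1,1}=\sum_k\beta_{1,k}=1$, so $\beta_{1,j}$ is precisely the normalized initial weight required by Theorem~\ref{thm:general}.

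With this identification, the regret against the best mapping in $\mathcal{G}^N$ equals the regret against the corresponding best expert $E_j$, so Theorem~\ref{thm:general} yields $\mathcal{R}(T,\mathcal{G}^N)\le \ln(1/\beta_{1,j})/\eta + MT\eta/2$, and it remains only to bound $\ln(1/\beta_{1,j})$ from above. Because $\beta_{1,j}$ factorizes over the nodes of the hierarchical representation of the optimal partition, I would write $1/\beta_{1,j}$ as one factor per node: $(|\Phi_i|+1)M\le (H_S+1)M$ for each leaf (stopping) node, i.e.\ each partition region, and $|\Phi_i|+1\le H_S+1$ for each internal splitting node. This gives $1/\beta_{1,j}\le \big((H_S+1)M\big)^{L}(H_S+1)^{S}\le \big((H_S+1)M\big)^{L+S}$, where $L$ and $S$ denote the numbers of leaf and splitting nodes, respectively.

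The final step is the combinatorial bookkeeping that collapses $L+S$ into $\Psi(A_R+1)$. Here $S$ is the number of splittings, at most $A_R$ by definition of $A_R$. For the leaves I would argue that the representation starts from the single root region and each splitting replaces one region by $|\phi|\le\Psi$ sub-regions, a net gain of at most $\Psi-1$; hence after $A_R$ splittings $L\le 1+A_R(\Psi-1)$, so $L+S\le 1+A_R\Psi\le \Psi(A_R+1)$, the last inequality using $\Psi\ge 1$. Taking logarithms gives $\ln(1/\beta_{1,j})\le \Psi(A_R+1)\ln\big((H_S+1)M\big)$, which substituted into the bound above completes the proof.

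I expect the main obstacle to be the first paragraph rather than the arithmetic: one must verify carefully that the recursion~\eqref{eq:wrecursion} really does encode a normalized prior over experts whose optimal-expert mass factorizes node by node --- that is, that the constants $\tfrac{1}{(|\Phi_i|+1)M}$ and $\tfrac{1}{|\Phi_i|+1}$ assemble, via Proposition~\ref{th:2}, into exactly the $\beta_{1,j}$ appearing in Theorem~\ref{thm:general}. Once this ``universal weighting'' interpretation is pinned down, the counting argument is loose enough that the slack between $1+A_R\Psi$ and $\Psi(A_R+1)$ absorbs every inequality, so no delicate optimization over the node structure is needed.
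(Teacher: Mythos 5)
Your proposal is correct and follows essentially the same route as the paper: reduce to Theorem~\ref{thm:general} via the prior weight $\beta_{1,j}$ of the optimal expert induced by the recursion~\eqref{eq:wrecursion}, and lower bound that weight by a counting argument over the hierarchical representation of the optimal partition. The only cosmetic difference is that you count stopping and splitting nodes directly ($L+S\le 1+A_R\Psi\le\Psi(A_R+1)$), whereas the paper tracks the multiplicative factor $(H_S+1)^{\Psi}M^{\Psi-1}$ by which the prior is divided at each split --- the two bookkeepings are equivalent and yield the same bound $\ln(1/\beta_{1,j})\le\Psi(A_R+1)\ln\bigl((H_S+1)M\bigr)$.
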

\begin{proof} [Proof of Theorem 2]
	If the optimal expert is defined over the root node, i.e., $A_R=0$, its prior weight in the mixture is
	\begin{equation}
	\beta_{1,j}=\frac{1}{(|\Phi_i|+1)M} \geq \frac{1}{(H_S+1)M}.
	\end{equation}
	With each split in the hierarchical structure (i.e., with each move down the hierarchy), the prior weights of the experts are divided by a factor which is at most $(H_S+1)^{\Psi}M^{\Psi-1}$. Thus, in case we need $A_R$ splittings to model the partition corresponding to the optimal expert, its prior weight is
	\begin{align}
	\beta_{1,j} \geq (H_S+1)^{-A_R\Psi-1}M^{A_R-A_R\Psi-1}.
	\end{align}
	Since $A_R \geq 1$ and $\Psi \geq 1$, we have
	\begin{align}
	\beta_{1,j} \geq (H_S+1)^{-\Psi (A_R+1)}M^{-\Psi (A_R+1)}.
	\end{align}
	Hence,
	\begin{equation} \label{eq:th4proof}
	\ln(1/\beta_{1,j}) \leq 
	-\Psi (A_R+1) \ln((H_S+1)M).
	\end{equation}
	Putting \eqref{eq:th4proof} into \eqref{exp4reg} concludes the proof.
\end{proof}
\begin{corollary}
	By setting
	\begin{equation}
	\eta=\sqrt{\frac{2\Psi(A_R+1) \ln((H_S+1)M)}{MT}},
	\end{equation}
	we get the regret bound of
	\begin{equation} \label{eq:regretbound}
	\mathcal{R}(T,\mathcal{G}^N) \leq \sqrt{0.5\Psi MT (A_R+1) \ln{((H_S+1)M)}}.
	\end{equation}
\end{corollary}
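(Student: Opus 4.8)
The plan is to read the Theorem 2 guarantee as a function of the single free parameter $\eta$ and to minimize it. Abbreviating the complexity-dependent constant as $C \triangleq \Psi(A_R+1)\ln((H_S+1)M)$, the bound takes the form $\mathcal{R}(T,\mathcal{G}^N) \leq C/\eta + MT\eta/2$. The first summand is the one-time cost of the prior weight placed on the optimal expert and decreases in $\eta$, whereas the second is the cumulative per-round estimation penalty and increases in $\eta$; hence the right-hand side is strictly convex in $\eta>0$ and has a unique interior minimizer.

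First I would locate that minimizer, which should be the value of $\eta$ quoted in the corollary. Either differentiating and solving $-C/\eta^2 + MT/2 = 0$, or applying the arithmetic-geometric-mean inequality to the two nonnegative summands, yields the balancing condition $C/\eta = MT\eta/2$, equivalently $\eta^\star = \sqrt{2C/(MT)}$. Re-expanding $C$ shows this equals $\eta = \sqrt{2\Psi(A_R+1)\ln((H_S+1)M)/(MT)}$, the prescribed learning rate, so the corollary's $\eta$ is exactly the one that equalizes the two terms of the Theorem 2 bound.

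Second, I would substitute $\eta^\star$ back into $C/\eta + MT\eta/2$. Since the two summands are equal at $\eta^\star$, each equals $\sqrt{CMT/2}$, so their sum collapses to a single closed-form square root; re-expanding $C$ then produces the stated bound up to the leading numerical constant, completing the argument.

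There is no genuine obstacle here: once Theorem 2 is in hand, the corollary is a one-line learning-rate optimization. The only point demanding care is the bookkeeping of the leading constant when collapsing the balanced sum into one square root, and I would verify it explicitly since it is the sole quantitative content of the statement. In particular, balancing the two equal terms gives a sum of $2\sqrt{CMT/2}=\sqrt{2CMT}$, so I would reconcile this leading factor carefully against the $0.5$ appearing under the root in the displayed bound before finalizing.
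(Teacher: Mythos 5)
Your approach is the standard learning-rate balancing and is exactly what the paper intends (the corollary is stated without proof, as an immediate consequence of Theorem 2), so there is no methodological gap. However, the factor you flag at the end is not a bookkeeping subtlety to be reconciled --- it is a genuine inconsistency in the stated corollary. With $C=\Psi(A_R+1)\ln((H_S+1)M)$ and $\eta^\star=\sqrt{2C/(MT)}$, each of the two balanced terms equals $\sqrt{CMT/2}$, so the optimized bound is $2\sqrt{CMT/2}=\sqrt{2CMT}$; moreover, by AM--GM, $C/\eta+MT\eta/2\geq\sqrt{2CMT}$ for \emph{every} $\eta>0$, so the displayed bound $\sqrt{0.5\,CMT}$ lies strictly below the minimum of the Theorem~2 right-hand side and cannot be obtained from it by any choice of learning rate. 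The corollary should read
\begin{equation}
\mathcal{R}(T,\mathcal{G}^N)\leq\sqrt{2\Psi MT(A_R+1)\ln((H_S+1)M)},
\end{equation}
i.e., the constant under the root is off by a factor of $4$ (equivalently, the bound by a factor of $2$); the order in $T$, $M$, $A_R$ is unaffected. Your derivation is otherwise complete and correct, and you should simply state the corrected constant rather than the one printed in the paper.
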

We next present several examples of hierarchical structures which can be employed by our algorithm with the introduced mathematical guarantees. Each structure has its own way of encoding the best arm selection policy, i.e., optimal arbitrary mapping. Hence, the proper selection of the hierarchical structure according to the target application leads to a smaller $A_R$ and a better performance, i.e., a regret upper bound vanishing faster in the average loss per round sense, together with the introduced weighting over the corresponding competition class $\mathcal{G}^N$, cf. Section \ref{simu} as well as the examples below.
\subsection{Example 1: Arbitrary Splitting}
If the hierarchical structure is an arbitrary splitting of $N$ leaf nodes into $2$ groups, then $\Psi=2$, $H_S=2^{N-1}-1$ and $A_R=M-1$. Hence, the regret is upper bounded as
\begin{align}
\mathcal{R}(T,\mathcal{G}^N)&\leq 
\frac{2M \ln(2^{N-1}M)}{\eta}
+\frac{MT\eta}{2} \nonumber\\
&\leq
\frac{2MN \ln(M)}{\eta}
+\frac{MT\eta}{2},
\end{align}
where the last inequality uses $2 \leq M$.

\subsection{Example 2: Binary Tree}
In binary trees we have $\Psi=2$ and $H_S=1$. For a binary tree with $N$ leaf nodes, we need at most $\log_2 N$ splitting to create each new region. Hence, $A_R=(R-1)\log_2 N$. Therefore,
\begin{align}
\mathcal{R}(T,\mathcal{G}^N)&\leq 
\frac{2((R-1)\log_2 N+1) \ln(2M)}{\eta}
+\frac{MT\eta}{2} \nonumber\\
&\leq
\frac{2R \log_2 N \ln(2M)}{\eta}
+\frac{MT\eta}{2}.
\end{align}

\subsection{Example 3: K-ary Tree}
If the hierarchical structure is a K-ary tree (for $K=2$ this becomes a binary tree) with $N$ leaf nodes and depth $D=\log_K N$, then $\Psi=K$, $H_S=1$ and $A_R=(R-1)\log_K N$. Therefore, we have
\begin{align}
\mathcal{R}(T,\mathcal{G}^N)&\leq
\frac{K(1+(R-1)\log_K N) \ln(2M)}{\eta}
+\frac{MT\eta}{2} \nonumber\\ 
&\leq \frac{KR\log_K N \ln(2M)}{\eta}
+\frac{MT\eta}{2}.
\end{align}

\subsection{Example 4: Lexicographical Splitting Graph}
In a lexicographal splitting graph with $N$ leaf nodes, we have $\Psi=2$, $H_S=N-1$ and $A_R=R-1$. Hence,
\begin{align}
\mathcal{R}(T,\mathcal{G}^N)\leq
\frac{2R \ln(NM)}{\eta}
+\frac{MT\eta}{2}.
\end{align}

\subsection{Example 5: K-group Lexicographical Splitting}
If the hierarchical structure is a splitting of $N$ sequentially ordered leaf nodes into $K$ groups (when $K=2$ this structure becomes the lexicographical splitting graph), then $\Psi=K$, $H_S= {{N-1}\choose{K-1}}$ and $A_R=\lceil \frac{R-1}{K-1} \rceil$. Therefore, the regret upper bound is
\begin{align}
\mathcal{R}(T,\mathcal{G}^N)&\leq
\frac{K(\lceil \frac{R-1}{K-1} \rceil+1) \ln((1+{{N-1}\choose{K-1}})M)}{\eta}
+\frac{MT\eta}{2} \nonumber\\ 
&\leq \frac{K(R+2K) \ln(NM)}{\eta}
+\frac{MT\eta}{2}.
\end{align}

\subsection{Example 6: Arbitrary Position Splitting}
In this case, for a $d$-dimensional context space, we have $\Psi=2$, $H_S=d$ and $A_R=(R-1)\log_2 N$. Therefore,
\begin{align}
\mathcal{R}(T,\mathcal{G}^N)&\leq 
\frac{2((R-1)\log_2 N+1) \ln((d+1)M)}{\eta}
+\frac{MT\eta}{2} \nonumber\\
&\leq
\frac{2R \log_2 N \ln((d+1)M)}{\eta}
+\frac{MT\eta}{2}.
\end{align}

We have successfully achieved a regret bound of $O(\sqrt{MTR\ln N \ln{M}})$ with proper selection of the learning rate. Note that typically, $N>>R$. Our regret bounds are only logarithmically dependent on $N$, hence, in soft-$O$ notation, we achieve the minimax optimal regret bound $\tilde{O}(\sqrt{TR})$.\par 
Next and finally, we address the goal of achieving the performance of the best arm selection policy, i.e., the performance of the optimal arbitrary mapping (in the ultimate set $\mathcal{U}$) from the context space to the bandit arms which is not necessarily in the competition class $\mathcal{G}^N$ but can be approximated arbitrarily well and almost perfectly, if desired, by the class by increasing $N$. The quantization process in our algorithm naturally produces an additive linear-in-time term in our regret against the truly optimal mapping in $\mathcal{U}$. In the following section, we assume  that the arm losses are Lipschitz continuous in the context vectors at each specific round. With this assumption, we show that using a uniform quantization of the context space, we can diminish the linear-in-time term in our regret against the optimal mapping in $\mathcal{U}$ by increasing the number of quantization levels $N$. Hence, we can achieve a performance as close as desired to the performance of the optimal mapping in $\mathcal{U}$.

\section{An Efficient Quantization Method to Asymptotically Achieve the Optimal Context Based Arm Selection}\label{EQ}
Suppose that the context space is the $n$-dimensional space $S=\lbrack 0,1 \rbrack^n$. Using a hierarchical structure with $N$ leaf nodes, our quantization scheme is as follows. We split the context space into $2^{\lfloor (\log_2 N)/n \rfloor+1}$ equal subspaces along the first $\log_2 N \;(\bmod\; n)$ dimensions (of the total $n$ dimensions), and $2^{\lfloor (\log_2 N)/n \rfloor}$ equal subspaces along the remaining dimensions. 
\begin{theorem}
Using aforementioned quantization method for our algorithm, if the arm loss functions are Lipschitz continuous with the Lipschitzness constant $c$, then the difference between the loss corresponding to the best mapping in $\mathcal{G}^N$ and the loss corresponding to the truly optimal mapping (in the ultimate set\footnote{This ultimate set can be non-rigorously considered as $\mathcal{G}^{\infty}$.} $\mathcal{U}$ of all possible arbitrary mappings from the context space to the set of bandit arms) is upper bounded by
\begin{equation}
\frac{2c\sqrt{n}} {\sqrt[n]{N}}.\label{lipup}
\end{equation}
\end{theorem}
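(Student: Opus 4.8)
The plan is to prove the bound by exhibiting a concrete quantized mapping in $\mathcal{G}^N$ whose loss at every round exceeds that of the optimal arbitrary mapping by at most $\frac{2c\sqrt{n}}{\sqrt[n]{N}}$; since the best mapping in $\mathcal{G}^N$ can only do better, this will suffice. The argument rests on two essentially independent ingredients that I would develop in turn: a purely geometric bound on the size of each quantization region, and a Lipschitz-based bound on the loss penalty of committing a whole region to a single arm.

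First I would settle the geometry. Writing $q=\lfloor(\log_2 N)/n\rfloor$, the described scheme produces boxes whose side length is $2^{-(q+1)}$ along $\log_2 N \bmod n$ of the coordinates and $2^{-q}$ along the remaining ones; one checks that the total number of regions is then exactly $N$ and that the largest side length of any region is $2^{-q}$. The distance from the center $\vec{c}_i$ of a region $r_i$ to any point it contains (its circumradius) is therefore at most $\tfrac12\sqrt{n}\,2^{-q}$. Using $q\geq(\log_2 N)/n-1$, which gives $2^{-q}\leq 2N^{-1/n}$, I obtain
\begin{equation}
\lVert \vec{s}-\vec{c}_i\rVert \;\leq\; \tfrac12\sqrt{n}\,2^{-q} \;\leq\; \frac{\sqrt{n}}{\sqrt[n]{N}} \qquad\text{for all } \vec{s}\in r_i. \nonumber
\end{equation}

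Next I would invoke Lipschitz continuity to control the per-round quantization error. Fix a round $t$ with context $\vec{s}_t\in r_i$, let $a=\arg\min_m l_{t,m}(\vec{s}_t)$ be the arm the optimal arbitrary mapping effectively realizes at $\vec{s}_t$, and let $b_i=\arg\min_m l_{t,m}(\vec{c}_i)$ be the arm that is best at the region center. Applying the Lipschitz inequality to each loss function twice and using the optimality of $b_i$ at $\vec{c}_i$,
\begin{align}
l_{t,b_i}(\vec{s}_t) &\leq l_{t,b_i}(\vec{c}_i) + c\lVert \vec{s}_t-\vec{c}_i\rVert \nonumber\\
&\leq l_{t,a}(\vec{c}_i) + c\lVert \vec{s}_t-\vec{c}_i\rVert \nonumber\\
&\leq l_{t,a}(\vec{s}_t) + 2c\lVert \vec{s}_t-\vec{c}_i\rVert. \nonumber
\end{align}
Combining this with the circumradius bound yields $l_{t,b_i}(\vec{s}_t)-\min_m l_{t,m}(\vec{s}_t)\leq 2c\lVert \vec{s}_t-\vec{c}_i\rVert\leq \frac{2c\sqrt{n}}{\sqrt[n]{N}}$, which is precisely the claimed gap.

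I expect the main obstacle to be the bookkeeping that converts this pointwise estimate into a statement about the single best mapping in $\mathcal{G}^N$: the center-optimal arm $b_i$ depends on the round, whereas a member of $\mathcal{G}^N$ must commit one arm per region across all rounds. The clean resolution is to read the bound as the per-round quantization penalty (the additive linear-in-time term referred to just before the theorem), so that the two-step Lipschitz chaining above, applied round by round, delivers the result. The only delicate points are the constants: the factor $2$ arising from the two Lipschitz applications, together with the factor $2$ absorbed into the circumradius through the floor in $q$, must both be tracked carefully so as to land exactly on $\frac{2c\sqrt{n}}{\sqrt[n]{N}}$ rather than a looser multiple, and it is essential to measure distances to the center (half-diameter) rather than to an arbitrary corner.
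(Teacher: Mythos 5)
Your proposal is correct and follows essentially the same route as the paper: bound the size of each quantization box (the paper works with the full diagonal $\sqrt{n}\,2^{-\lfloor(\log_2 N)/n\rfloor}\leq 2\sqrt{n}/\sqrt[n]{N}$ and multiplies by $c$ once, which is numerically identical to your $2c$ times the half-diagonal obtained from chaining Lipschitz through the region center), then convert that geometric bound into a per-round loss penalty via Lipschitz continuity. Your version is in fact slightly more explicit than the paper's about where the factor of $2$ and the round-by-round accounting come from, and the residual bookkeeping issue you flag (the center-optimal arm varying with $t$) is glossed over in the paper's own proof to the same degree.
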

\begin{proof}[Proof of Theorem 3]
	Using this quantization method, the subspaces in the finest partition of the context space are $n$-dimensional cubes with the longest diagonal length equal to
	\begin{equation}
	\sqrt{{\frac{n-(\log_2 N \;(\bmod\; n))}{({2^{\lfloor \frac{\log_2 N}{n} \rfloor}})^2}}+{\frac{\log_2 N \;(\bmod\; n)}{(2^{{\lfloor \frac{\log_2 N}{n}+1 \rfloor}})^2}}}.
	\end{equation}
	Since $\log_2 N \;(\bmod\; n) \geq 0$, this upper bound is at most equal to
	\begin{equation}
	\sqrt{\frac{n}{2^{2{\lfloor \frac{\log_2 N}{n} \rfloor}}}} \leq \frac{2\sqrt{n}}{2^{\frac{\log_2 N}{n}}}=\frac{2\sqrt{n}} {\sqrt[n]{N}}.
	\end{equation}
	Since the loss functions are Lipschitz continuous, the difference between the loss corresponding to the truly optimal mapping in $\mathcal{U}$ and the best mapping in $\mathcal{G}^N$ cannot exceed the Lipschitzness constant times the quantized cubes diagonal length, which concludes the proof.
\end{proof}
Note that the Lipschitzness assumption does not intervene with the adversarial setting. The loss functions can be quite different in different rounds and as long as they are Lipschitz continuous at each specific round, the assumption holds and our algorithm is competitive against the ultimate set of all possible arbitrary mappings $\mathcal{U}$. In this case, combining \eqref{lipup} with the regret bound in \eqref{eq:regretbound} directly concludes the following theorem.
\begin{theorem}
	Consider a contextual $M$-armed bandit problem with the context space $S=[0,1]^n$, where the loss functions of the arms are Lipschitz continuous with the constant $c$ at all rounds. If we use a hierarchical structure with $N$ leaf nodes following the quantization scheme described in Section \ref{EQ}, the regret of Algorithm \ref{tab:table1} against the truly optimal strategy in a $T$ round trial is upper bounded as follows
	\begin{equation}
	R{(T,\mathcal{U})} \leq \sqrt{\frac{\Psi MT (A_R+1) \ln{((H_S+1)M)}}{2}}+\frac{2Tc\sqrt{n}} {\sqrt[n]{N}}.\label{bound}
	\end{equation}
\end{theorem}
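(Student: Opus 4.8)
The plan is to bridge our algorithm's cumulative loss and that of the truly optimal mapping $g^{*}\in\mathcal{U}$ through the best mapping in the quantized competition class, so that the regret splits into a \emph{learning} part (already controlled by Corollary 1) and a \emph{quantization} part (already controlled by Theorem 3). Let $g^{*}\in\mathcal{U}$ attain the maximum in the definition of $\mathcal{R}(T,\mathcal{U})$, and let $g^{N}\in\mathcal{G}^N$ be the mapping of minimum cumulative loss over the quantized class. First I would write the exact additive decomposition
\begin{align}
\sum_{t=1}^{T}l_{t,I_t}-\sum_{t=1}^{T}l_{t,g^{*}(\vec{s}_t)}
&=\Bigl(\sum_{t=1}^{T}l_{t,I_t}-\sum_{t=1}^{T}l_{t,g^{N}(\vec{s}_t)}\Bigr)\nonumber\\
&\quad+\Bigl(\sum_{t=1}^{T}l_{t,g^{N}(\vec{s}_t)}-\sum_{t=1}^{T}l_{t,g^{*}(\vec{s}_t)}\Bigr),
\end{align}
and then take expectation with respect to the internal randomization of Algorithm \ref{tab:table1}.

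For the first bracket, since $g^{N}$ minimizes the cumulative loss over $\mathcal{G}^N$, its expectation is exactly the regret $\mathcal{R}(T,\mathcal{G}^N)$ of Algorithm \ref{tab:table1} against the best quantized mapping, so I would bound it verbatim by \eqref{eq:regretbound}, i.e.\ by $\sqrt{0.5\,\Psi MT(A_R+1)\ln((H_S+1)M)}$. For the second bracket I would invoke Theorem 3: under the per-round Lipschitz hypothesis, \eqref{lipup} bounds the per-round loss gap between the best mapping in $\mathcal{G}^N$ and the optimal mapping $g^{*}$ in $\mathcal{U}$ by $2c\sqrt{n}/\sqrt[n]{N}$. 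Summing this round-wise bound over the $T$ rounds converts it into the additive linear-in-time term $2Tc\sqrt{n}/\sqrt[n]{N}$, and adding the two bounds then yields \eqref{bound}.

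The only point requiring care — rather than a genuine obstacle, since the remark preceding the statement already announces that the result follows by combining \eqref{lipup} and \eqref{eq:regretbound} — is to ensure that the quantization gap is charged against the \emph{best} member of $\mathcal{G}^N$ and not against an arbitrary approximant. This is precisely what Theorem 3 delivers: because $g^{N}$ has cumulative loss no larger than the region-wise approximant of $g^{*}$ constructed in the proof of Theorem 3, the second bracket inherits the same $2Tc\sqrt{n}/\sqrt[n]{N}$ ceiling. I would also remark that the Lipschitz hypothesis is imposed round by round and therefore does not conflict with the adversarial, statistics-free setting: the loss surfaces may change arbitrarily across rounds while each remains $c$-Lipschitz, which is all that the summation over $t$ in the second bracket requires.
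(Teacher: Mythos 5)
Your proposal matches the paper's own argument: the paper gives no separate proof of this theorem, stating only that it follows by combining the quantization gap \eqref{lipup} (summed over the $T$ rounds) with the tuned regret bound \eqref{eq:regretbound}, which is exactly your additive decomposition through the best mapping in $\mathcal{G}^N$. Your extra remark on charging the quantization gap against the region-wise approximant of $g^{*}$ rather than an arbitrary member of $\mathcal{G}^N$ is a correct and slightly more careful rendering of the same step.
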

We emphasize that we can make the linear-in-time term of the upper bound in \eqref{bound} as small as desired by growing the hierarchical structure and increasing the number of leaf nodes $N$, which is equal to the number of quantization levels.

\section{Experiments}\label{simu}

In this section, we demonstrate the performance of our algorithm in different scenarios involving both real and synthetic data. We demonstrate the performance of our main algorithm \textit{HSB} with various hierarchical structures including binary tree (\textit{HSB-BT}), lexicograph (\textit{HSB-LG}) and arbitrary position splitting (\textit{HSB-APS}) \cite{ContextWeighting}. We compare the performance of our algorithms against the state-of-the-art adversarial bandit algorithms \textit{EXP3} and \textit{S-EXP3} \cite{Cesa}. In all of the experiments, the parameters of \textit{EXP3} and \textit{S-EXP3} algorithms are set to their optimal values according to their publication \cite{Cesa}. 
\subsection{Stationary Environment}\label{stationary}
\begin{figure}[t]
	\centering
	\includegraphics[width=0.40\textwidth]{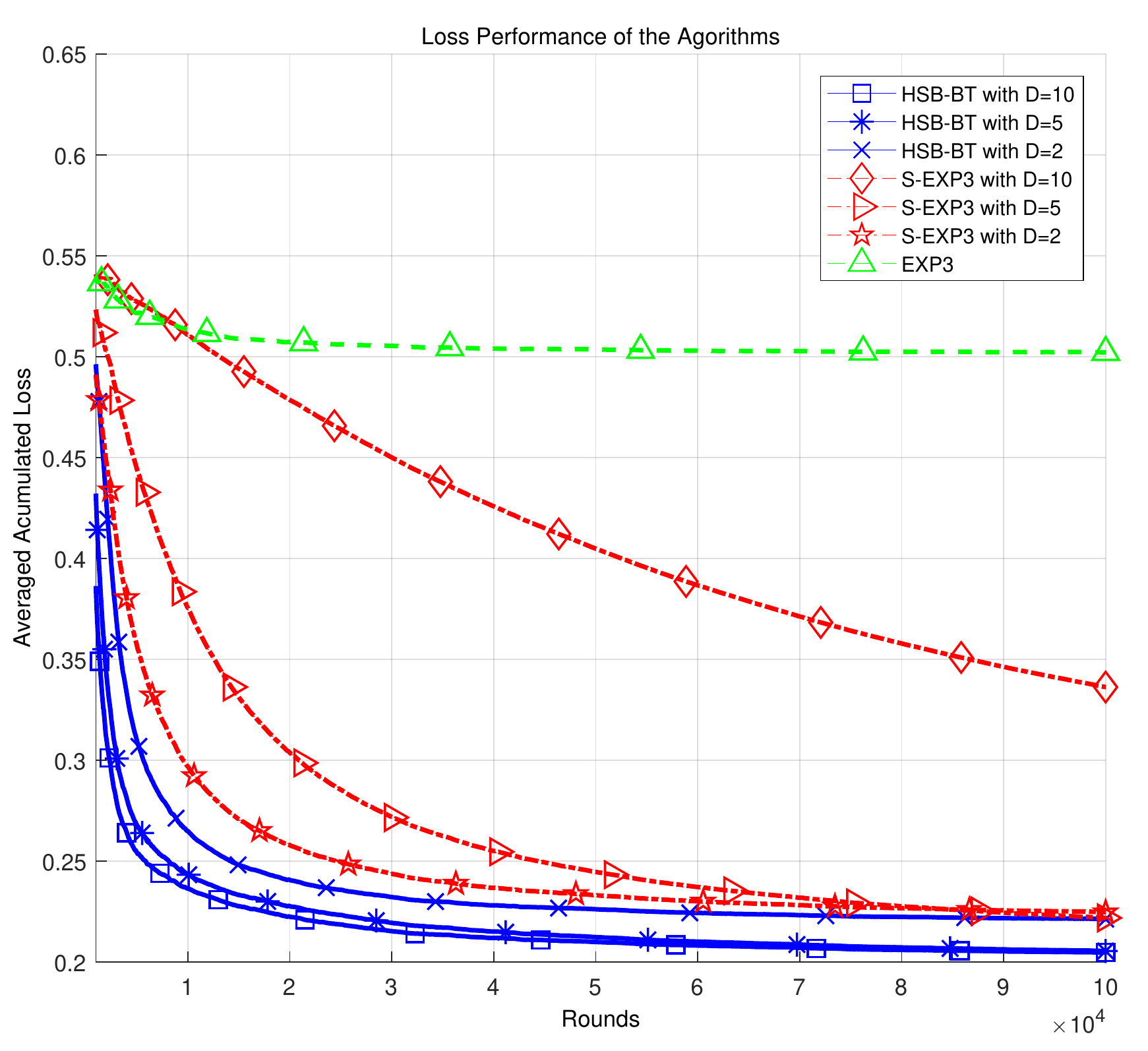}\\
	\caption{The averaged accumulated loss of \textit{HSB-BT}, \textit{S-EXP3}, and \textit{EXP3} on the datasets defined using (\ref{model}).}
	\label{figure6}
\end{figure} 
We first construct a game with $3$-armed bandit, where the context space is the $1$-dimensional space $S=[0,1]$. Each arm $i$ generates its loss according to a Bernoulli distribution with parameter $p_i$, i.e., the loss is equal to $1$ with probability equal to $p_i$. These parameters, i.e., $p_1,p_2,p_3$, depend on the context variable ${s}_t$ as 
\begin{align}\nonumber 
&p_1(s_t)=0.5+0.5\sin(2\pi s_t),\\\nonumber
&p_2(s_t)=\sin(\pi s_t),\\
&p_3(s_t)=s_t.
\label{model}
\end{align}
Here, the optimal strategy is defined as follows
\begin{equation}
g(s_t)=\label{rparameter}
\begin{cases} 
3, & s_t<0.5 \\
1, & 0.5\leq s_t < 0.9182 \\
2, & 0.9182 \leq s_t.
\end{cases}
\end{equation}

In this experiment, we generate the context variable $s_t$ randomly with uniform distribution over the context space, i.e., $[0,1]$, and compare the averaged cumulated loss performance, i.e., $(\sum_{\tau=1}^{t} l_{\tau,I_\tau}) /t$, for our algorithm \textit{HSB-BT} with various depth parameters equal to $2$, $5$, and $10$, \textit{S-EXP3} \cite{Cesa} with the same depth parameters, and \textit{EXP3} \cite{Cesa}. \par 
To this end, we generate $10$ synthetic datasets of length $10^5$. To produce each dataset, first, $10^5$ context variables $s_t$ are drawn according to a uniform probability distribution over the interval $[0,1]$. Then, the arm losses corresponding to different rounds are drawn from the Bernoulli distributions, parameters of which are determined according to \eqref{model}. Each dataset is presented to the algorithms $10$ times and the results are averaged. This process is repeated for all $10$ datasets and the ensemble averages are plotted in Fig. \ref{figure6}. Two important results can be derived from the result of this experiment. First, our algorithm \textit{HSB-BT} outperforms both of the \textit{S-EXP3} and \textit{EXP3} algorithms. Second, while increasing the depth uniformly improves the performance of our algorithm, it can degrade the performance of \textit{S-EXP3} due to the overtraining. The superior performance of our algorithm in this experiment is because of its fast convergence to the optimal mapping. Here, \textit{EXP3} has a fast convergence but it converges to a suboptimal mapping because it does not use the context information. On the other hand, \textit{S-EXP3} converges to the optimal mapping, but needs a huge amount of data to get trained. Our algorithm uses an efficient adaptive combination of the experts with intelligent initial weights to obtain the advantages of both \textit{EXP3} and \textit{S-EXP3} algorithms, while mitigating their disadvantages.

\subsection{Nonstationary Environment}\label{suddenchange}
In this part, we illustrate the averaged cumulated loss performance of the algorithms in a nonstationary environment. To this end, we construct $10$ different datasets of length $10^5$ as in Section \ref{stationary}. However, here the arm losses follow a model as in \eqref{model} in the first quarter of the rounds, and the following model in the rest of the rounds: 
\begin{align}\nonumber
&p_1(s_t)=\sin(\pi s_t),\nonumber\\
&p_2(s_t)=s_t,\nonumber\\
&p_3(s_t)=0.5+0.5\sin(2\pi s_t).
\label{model2}
\end{align}
\begin{figure}[t]
	\centering
	\includegraphics[width=0.45\textwidth]{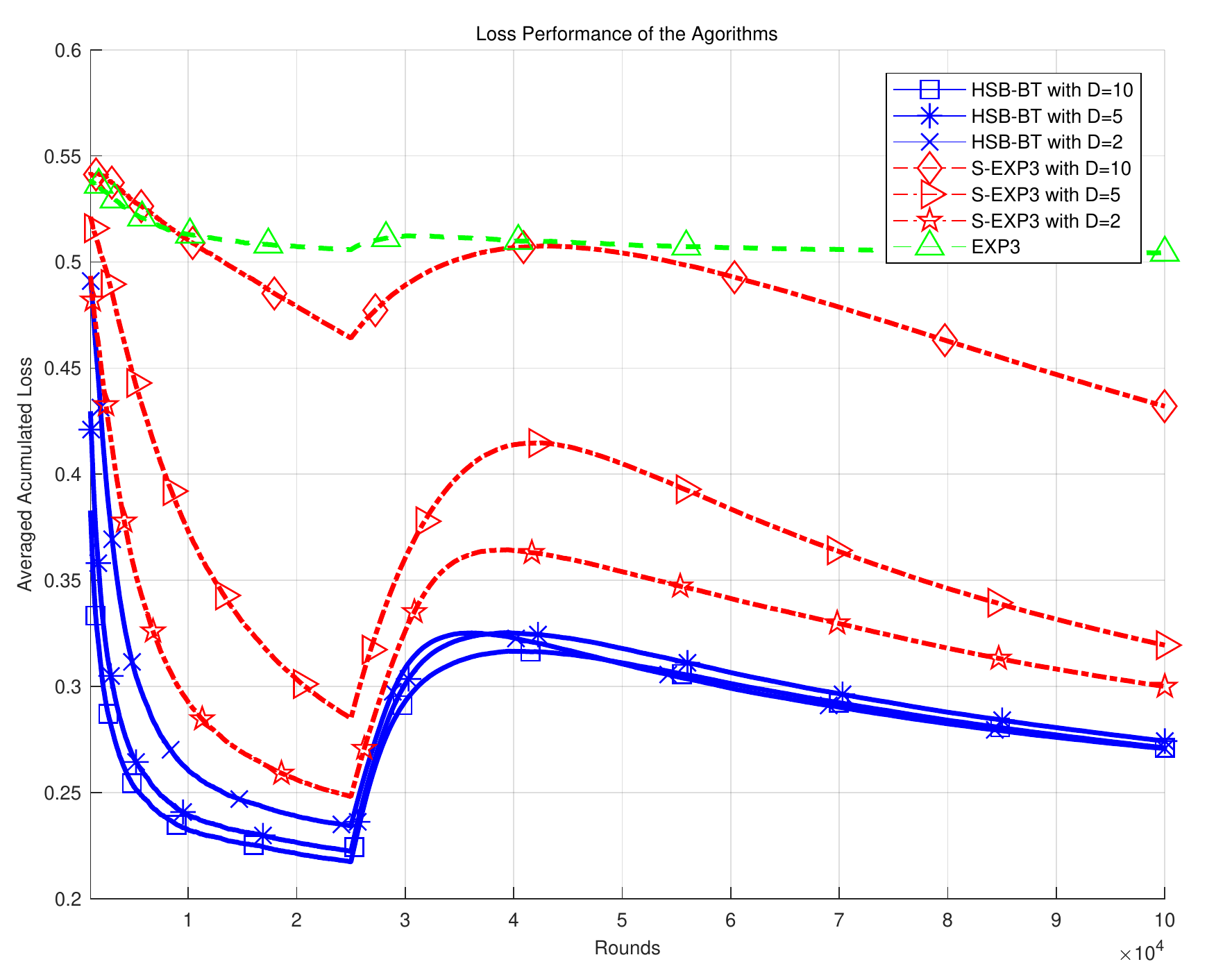}\\
	\caption{The averaged accumulated loss of \textit{HSB-BT}, \textit{S-EXP3}, and \textit{EXP3} on the datasets as described in Section \ref{suddenchange}, involving a rapid change in the behavior of the arms after $25\%$ of the rounds.}
	\label{figure8}
\end{figure}
Hence, we have an abrupt change in the model of the arms within the rounds. Each dataset is presented to the algorithms $10$ times and the results are averaged. This process is repeated for all $10$ datasets and the ensemble averages are plotted in Fig. \ref{figure8}. As shown in the figure, our algorithm \textit{HSB-BT} not only outperforms its competitor before the rapid change in the model of the bandit arms but also adopts better to this rapid change in comparison to the competitors. 
\subsection{Real Life Online Advertisement Dataset}
\begin{algorithm}[t]
	\caption{The offline evaluation method used to test the competitor algorithms over the Yahoo! Today Module dataset}
	\label{tab:table4}
	\begin{algorithmic}[1]
		\algsetup{linenosize=\tiny}
		\STATE \textbf{Input:} Bandit algorithm $\mathcal{A}$, logged data for $T$ rounds
		\STATE \textbf{Initialize:} $L=0$ and $R=0$
		\FOR {$t=1$ \TO $T$ }
		\STATE Get $s_t \in \lbrace 1,2,...,N \rbrace$ from the log
		\STATE Run the algorithm $\mathcal{A}$.
		\IF {the arm, selected by $\mathcal{A}$ is the arm which is shown to the user}
		\STATE Use the user feedback to update $\mathcal{A}$.
		\STATE Set $R=R+1$.
		\STATE If the user has not clicked set $L=L+1$.
		\ELSE
		\STATE Ignore this round.
		\ENDIF
		\ENDFOR
		\STATE $L$ and $R$ show the total loss and the total rounds respectively.
	\end{algorithmic} 	
\end{algorithm}
In this section, we demonstrate the superior performance of our algorithms \textit{HSB-BT} and \textit{HSB-LG} against their natural competitors \textit{EXP3} and \textit{S-EXP3} over the well known real life dataset provided by Yahoo! Research. This dataset contains a user click log for news articles displayed in the featured tab of the Today Module on Yahoo!'s front page, within October 2 to 16, 2011. The dataset contains 28041015 user visits. For each visit, the user is associated with a binary feature vector of dimension 136 that contains information about the user like age, gender, behavior targeting features, etc. We used an unbiased offline evaluation method as in \cite{eval}, to test the competitors over this dataset. A brief pseudo-code of this evaluation method is shown in Algorithm \ref{tab:table4}. In this experiment, we ran a PCA algorithm \cite{pcabook} over the first $5\%$ of the data to get the principal components of the feature vectors. We mapped the feature vectors over the first principal component to form a set of $1-$dimensional context variables. We used these context variables for \textit{S-EXP3}, \textit{HSB-BT} and \textit{HSB-LG} algorithms. We tested the \textit{EXP3} and \textit{S-EXP3} algorithms with several depth parameters, while their parameters were set to their optimum values \cite{Cesa}. However, since we do not have any information about the number of disjoint regions in the optimal mapping, i.e., $R$, the $\eta$ parameter for the \textit{HSB-BT} and \textit{HSB-LG} algorithms cannot be tuned to the optimum value analytically. In this experiment, in order to have a fair comparison, we set the $\eta$ parameter of the \textit{HSB-BT} and \textit{HSB-LG} algorithm with a specific depth equal to the $\eta$ parameter of the \textit{S-EXP3} algorithm with the same depth. We emphasize that no numerical optimization is done for the $\eta$ parameter of our algorithms. The percentage of user clicks for different algorithms are shown in Fig. \ref{yahoobar}. As shown in this table, our algorithms outperform both of the \textit{S-EXP3} and \textit{EXP3} algorithms, even though the learning rate parameters of our algorithms are not tuned to the optimum values due to the lack of knowledge on the parameter $R$.


\subsection{Real Life Classification Dataset}
In this experiment, we use well-known LandSat dataset \cite{Statlog} to show how our algorithm can be employed for online multi-class classification in the Error Correcting Output Codes (ECOC) framework \cite{ecoc}. This dataset consists of $6435$ samples from $6$ classes. The feature vectors are $36$-dimensional integer vectors.
\par In the ECOC framework, given a set of $C$ classes, we assign a binary codeword of length $N_C$ to each one of the classes. We arrange these codewords as rows of a coding matrix $M_C \in \lbrace +1,-1 \rbrace^{C \times N_C}$. We consider each one of the $N_C$ columns of $M_C$ as a binary classification problem and run a binary classifier over each column. The $i^{\text{th}}$ classifier is to learn whether the $i^{\text{th}}$ bit of the codeword is $+1$ or $-1$. In order to label a new sample, the feature vector is fed to the binary classifiers to obtain a codeword based on their outputs. We then decide on the label of the sample based on its codeword.\par 
In this experiment, we use the one-versus-all coding \cite{ecoc} to form our coding matrix as shown in table 2 and run $6$ Online Perceptrons in parallel as our binary classifiers. We use the codewords obtained from the Perceptrons as our context vectors and the classes as our bandit arms. We provide our algorithm HSB with the context vectors and label the sample based on the arm suggested by the algorithm. Then, we observe the true label and suffer a loss equal to $1$ in case of incorrect label. The competitors in this experiment are our algorithm HSB with two different hierarchical structures of "Arbitrary Position Splitting" (\textit{HSB-APS}) and "Binary Tree" (\textit{HSB-BT}), alongside \textit{EXP3}, \textit{S-EXP3} and \textit{Hamming Decoding} \cite{ecoc}. The learning parameters of the algorithms are set to their optimal value.\par 
\begin{figure}[t]
	\centering
	\includegraphics[width=0.45\textwidth]{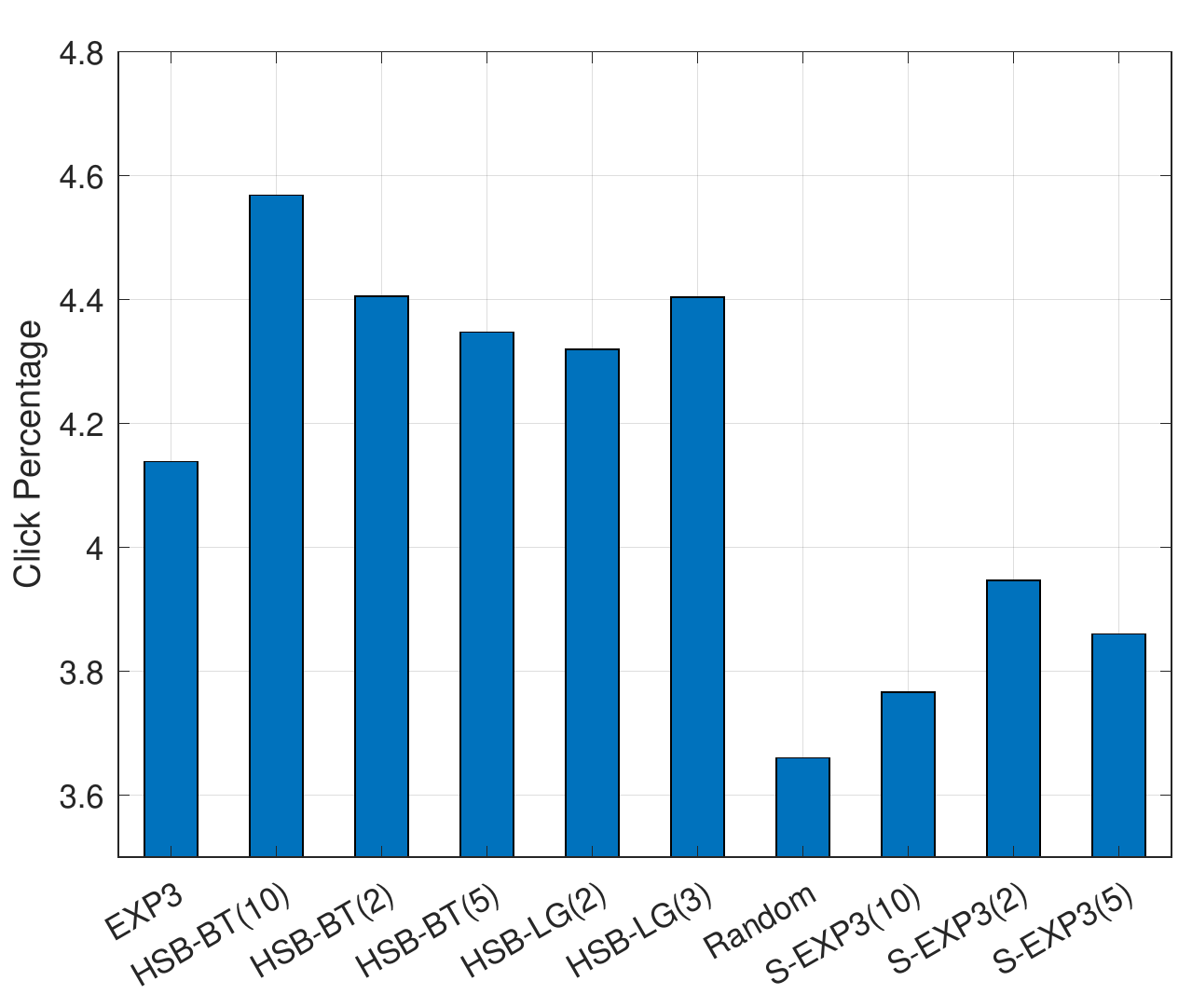}\\
	\caption{Percentage of click in the Yahoo! Today Module dataset}
	\label{yahoobar}
\end{figure}

We emphasize that while the \textit{Hamming Decoder} knows the codewords corresponding the classes a priori, other competitors do not use this information and try to learn the best mapping from the context space, i.e., codewords space, to the classes. For presentation simplicity, we have splitted the samples into $9$ consecutive epochs and averaged the number of errors over each epoch. As shown in Figure \ref{bargraph}, the algorithms \textit{S-EXP3}, \textit{HSB-BT} and \textit{HSB-APS} compensate their lack of information on the coding matrix (compared to the Hamming Decoder) as time goes on. Among them, \textit{HSB-APS} outperforms the others and even \textit{Hamming Decoder} in the last $3$ epochs as expected.

%
\section{Concluding Remarks}\label{Conc}
We studied the contextual multi-armed bandit problem in an adversarial setting and introduced truly online and low complexity algorithms that asymptotically achieve the performance of the best context dependent bandit arm selection policy. Our core algorithm quantizes the space of the context vectors into a large number of disjoint regions using an efficient quantization method and forms the class of all mappings from these regions to the bandit arms. Then, it adaptively combines these mappings in a mixture-of-experts setting and achieves the performance of the best mapping in the class. We prove performance upper bounds for the introduced algorithms. These upper bounds show that we achieve the performance of the truly optimal mapping (which might be out of our class of mappings) by increasing the number of quantization levels. We use hierarchical structures to implement our algorithms in an efficient way such that the computational complexity is log-linear in the number of quantization levels. We have no statistical assumptions on the behavior of the context vectors and the bandit arms, hence our results are guaranteed to hold in an individual sequence manner. Through extensive set of experiments involving synthetic and real data, we demonstrate the significant performance gains achieved by the proposed algorithms in comparison to the state-of-the-art techniques.
\begin{figure}[t]
	\centering
	\includegraphics[width=0.45\textwidth]{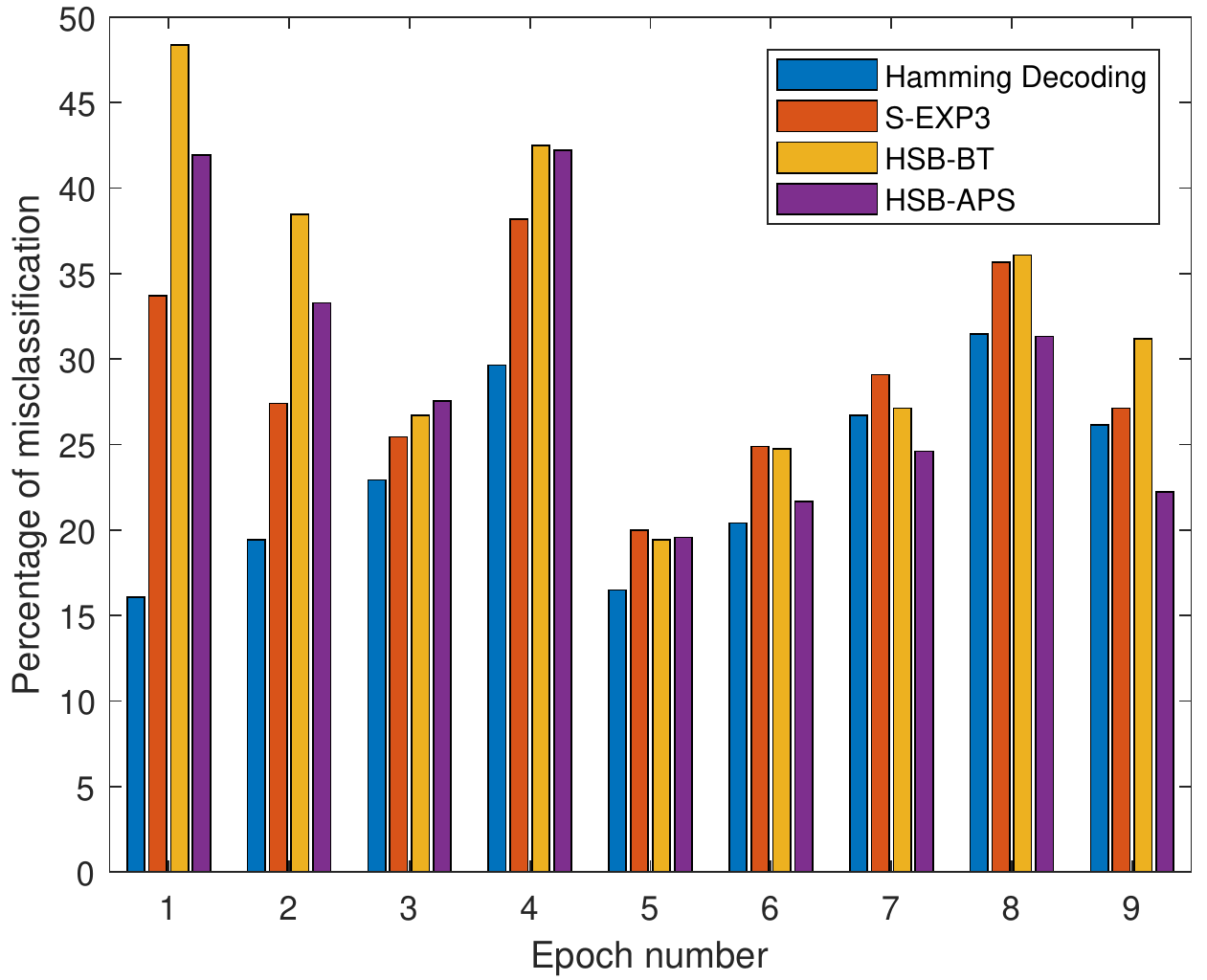}\\
	\caption{The percentage of misclassification of the competitors over $9$ consecutive epochs of length $715$.}
	\label{bargraph}
\end{figure}
\appendices
\section{Proof of Theorem 1} \label{app:1}
	From the definition, denoting the mapping followed by the $j^{\text{th}}$ expert by $g_j(.)$, we have
	\begin{equation}\label{avvalin}
	\mathcal{R}(T,E_j)=\mathbb{E}\left[\sum_{t=1}^{T}l_{t,I_t}-\sum_{t=1}^{T}l_{t,g_j({\vec{s}_t})}\right]
	\end{equation}
	Here, $l_{t,I_t}$ can be expanded as
	\begin{align}
	l_{t,I_t}&=\mathbb{E}_{j \sim \boldsymbol{\beta}_t} \tilde{l}_{t,g_j(\vec{s}_t)}\nonumber\\
	&=\frac{1}{\eta}\left(\ln \left(\mathbb{E}_{j \sim \boldsymbol{\beta}_t} e^{-\eta \tilde{l}_{t,g_j(\vec{s}_t)} }\right) + \eta \mathbb{E}_{j \sim \boldsymbol{\beta}_t} \tilde{l}_{t,g_j(\vec{s}_t)} \right)\nonumber\\
	&-\frac{1}{\eta}\ln \mathbb{E}_{j \sim \boldsymbol{\beta}_t} e^{-\eta \tilde{l}_{t,g_j(\vec{s}_t)} }. \label{twoterm}
	\end{align}
	The first term in \eqref{twoterm} can be bounded using the inequalities $\ln{x}\leq x-1$ and $\exp(-x)-1+x \leq x^2/2$, for all $x \geq 0$, as
	\begin{align}
	\ln &\left(\mathbb{E}_{j \sim \boldsymbol{\beta}_t} e^{-\eta \tilde{l}_{t,g_j(\vec{s}_t)} }\right) + \eta \mathbb{E}_{j \sim \boldsymbol{\beta}_t} \tilde{l}_{t,g_j(\vec{s}_t)}\nonumber\\
	&\leq \mathbb{E}_{j \sim \boldsymbol{\beta}_t} \left\lbrack e^{-\eta \tilde{l}_{t,g_j(\vec{s}_t)} }-1+\eta \tilde{l}_{t,g_j(\vec{s}_t)} \right\rbrack\nonumber\\
	&\leq \mathbb{E}_{j \sim \boldsymbol{\beta}_t} \frac{\eta^2 \tilde{l}_{t,g_j{(\vec{s}_t})}^2}{2}
	=\frac{\eta^2 l_{t,I_t}^2}{2 p_{t,I_t}}\leq \frac{\eta^2}{2 p_{t,I_t}}. \label{firstterm}
	\end{align}
	In order to bound the second term in \eqref{twoterm}, we just rewrite the expectation using \eqref{expweight} as follows. For $t=1$, we have
	\begin{equation}
	-\frac{1}{\eta}\ln \mathbb{E}_{j \sim \boldsymbol{\beta}_1} e^{-\eta \tilde{l}_{1,g_j(\vec{s}_1)} }=-\frac{1}{\eta} \ln \frac{\sum_{j=1}^{M^N} \alpha_{1,j} e^{-\eta \tilde{l}_{1,g_j(\vec{s}_{1})} }}{\sum_{j=1}^{M^N} \alpha_{1,j}},\label{ssecondterm}
	\end{equation}
	and for $t \geq 2$, we have
	\begin{align}
	-\frac{1}{\eta}\ln \mathbb{E}_{j \sim \boldsymbol{\beta}_t} &e^{-\eta \tilde{l}_{t,g_j(\vec{s}_t)} }\nonumber\\
	&=-\frac{1}{\eta}\ln \frac{\sum_{j=1}^{M^N} \alpha_{1,j}e^{-\eta \sum_{\tau=1}^{t} \tilde{l}_{\tau,g_j(\vec{s}_{\tau})}}}{\sum_{j=1}^{M^N} \alpha_{1,j}e^{-\eta \sum_{\tau=1}^{t-1} \tilde{l}_{\tau,g_j(\vec{s}_{\tau})}}}.\label{secondterm}
	\end{align}
	Putting the bounds in \eqref{firstterm} and \eqref{secondterm} into \eqref{twoterm}, we have
	\begin{align}
	\sum_{t=1}^{T}l_{t,I_t}	&\leq -\frac{1}{\eta} (\sum_{t=2}^{T} \ln \frac{\sum_{j=1}^{M^N} \alpha_{1,j}e^{-\eta \sum_{\tau=1}^{t} \tilde{l}_{\tau,g_j(\vec{s}_{\tau})}}}{\sum_{j=1}^{M^N} \alpha_{1,j}e^{-\eta \sum_{\tau=1}^{t-1} \tilde{l}_{\tau,g_j(\vec{s}_{\tau})}}}\nonumber\\
	&+\ln \frac{\sum_{j=1}^{M^N} \alpha_{1,j} e^{-\eta \tilde{l}_{1,g_j(\vec{s}_{1})} }}{\sum_{j=1}^{M^N} \alpha_{1,j}})+\frac{\eta T}{2 p_{t,I_t}} \label{equation14}.
	\end{align}
	Opening the first two term in \eqref{equation14}, we have
	\begin{align}
	\sum_{t=1}^{T}l_{t,I_t}	&\leq -\frac{1}{\eta} \ln \sum_{j=1}^{M^N} \alpha_{1,j}e^{-\eta \sum_{\tau=1}^{T} \tilde{l}_{\tau,g_j(\vec{s}_{\tau})}}\nonumber\\
	&+ \frac{1}{\eta} \ln \sum_{j=1}^{M^N} \alpha_{1,j}+\frac{\eta T}{2 p_{t,I_t}}\label{equation15}.
	\end{align}
	Since $\sum_{j=1}^{M^N} \alpha_{1,j}e^{-\eta \sum_{\tau=1}^{T} \tilde{l}_{\tau,g_j(\vec{s}_{\tau})}} \leq \alpha_{1,j}e^{-\eta \sum_{\tau=1}^{T} \tilde{l}_{\tau,g_j(\vec{s}_{\tau})}}$, we have
	\begin{align}
	\sum_{t=1}^{T}l_{t,I_t}	&\leq -\frac{1}{\eta} \ln \alpha_{1,j}+\sum_{\tau=1}^{T} \tilde{l}_{\tau,g_j(\vec{s}_{\tau})}\nonumber\\
	&+ \frac{1}{\eta} \ln \sum_{j=1}^{M^N} \alpha_{1,j}+\frac{\eta T}{2 p_{t,I_t}}\nonumber\\
	&=\frac{\ln 1/\beta_{1,j}}{\eta}+\frac{\eta T}{2 p_{t,I_t}} 
	+\sum_{\tau=1}^{T} \tilde{l}_{\tau,g_j(\vec{s}_{\tau})}.
	\end{align}
	Taking expectation from both sides (with respect to $I_t \sim \vec{p}_t$) and substituting $\mathbb{E}[\tilde{l}_{\tau,g_j(\vec{s}_{\tau})}]={l}_{\tau,g_j(\vec{s}_{\tau})}$ and $\mathbb{E}[\frac{1}{p_{t,I_t}}]=M$ into the result concludes the proof.
\section{Proof of Proposition 1} \label{app:2}
		We prove this proposition using induction. For leaf nodes where $\Phi_i=\emptyset$, we have
	\begin{equation}
	w_{t,i}=\frac{1}{M} \sum_{m=1}^{M} \alpha_{t,m,i}.
	\end{equation}
	From the definition of $\alpha_{t,m,i}$ in \eqref{eq:alphadef} we have
	\begin{equation}
	w_{t,i}=\sum_{m=1}^{M} \frac{1}{M}\exp(-\eta \sum_{\substack{\tau<t \\ \vec{s}_{\tau} \in r_i}}^{}\tilde{l}_{\tau,m} )=\sum_{k \in \Gamma_i}^{} \alpha_{t,k},
	\end{equation}
	where $\alpha_{1,k}=1/M$ for all $k \in \Gamma_i$.\par
	Consider the node $v_i$. Suppose $\forall \phi \in \Phi_i, \forall j \in \phi$ we have
	\begin{equation}
	w_{t,j}=\sum_{k \in \Gamma_j}^{} \alpha_{t,k}.
	\end{equation}
	It suffices to show that 
	\begin{equation}
	w_{t,i}=\sum_{k \in \Gamma_i}^{} \alpha_{t,k}.
	\end{equation}
	
	\par 
	The set of experts defined over $v_i$, i.e., $\Gamma_i$, can be decomposed into the following subsets:
	\begin{itemize}
		\item $\Gamma_i^o$ : The set of experts, which map the whole context space into a fixed arm. This set contains $M$ experts.
		\item $\Gamma_i^{\phi}$, $\phi \in \Phi_i$ : The set of experts, which partition the context space into the regions $r_j$, $j \in \phi$, and follow a specific expert over each node $j \in \phi$, based on the observed $\vec{s}_t$. If $\vec{s}_t \in r_j$, the experts in $\Gamma_i^{\phi}$ follow the experts in $\Gamma_j$. This set contains $\prod_{j \in \phi}^{} | \Gamma_j |$ experts. Each experts in $\Gamma_i^{\phi}$ can be represented by a vector of experts $\vec{k}_{\phi} \in \prod_{j \in \phi}^{}\Gamma_j$, where $\vec{k}_{\phi} (j)$ is an expert defined over node $j$.
	\end{itemize}
	We emphasize that even though we have
	\begin{equation}
	\Gamma_i^o \cup (\bigcup\limits_{\phi \in \Phi_i}^{} \Gamma_i^{\phi})=\Gamma_i,
	\end{equation}
	the intersection of any two of these $|\Phi_i|+1$ subsets is not empty necessarily. In particular, the $M$ experts in $\Gamma_i^o$ are also included among the elements of $\Gamma_i^{\phi}$ for all $\phi \in \Phi_i$. In fact, each expert in $\Gamma_i^o$ can be seen as an expert which partitions the context space into $r_j$'s for $j \in \phi$, and follows the experts which select a fixed arm $m$ over all the nodes $v_j$'s.\par 
	We have
	\begin{equation}
	\prod_{j \in \phi}^{} w_{t,j} =
	\prod_{j \in \phi}^{} \left( \sum_{k \in \Gamma_j}^{}\alpha_{t,k} \right)=\sum_{\vec{k}_{\phi} \in \prod_{j \in \phi}^{}\Gamma_j}^{} \left( \prod_{j}^{} \alpha_{t,\vec{k}_{\phi}(j)} \right).
	\end{equation}
	We open the product term as
	\begin{align} \label{eq:prodterm}
	&\prod_{j}^{} \alpha_{t,\vec{k}_{\phi}(j)}\nonumber\\
	&=\prod_{j}^{} \alpha_{1,\vec{k}_{\phi}(j)} \exp \left( -\eta \sum_{\tau < t}^{} \sum_{j}^{} \tilde{l}_{\tau,g_{\vec{k}_{\phi}(j)}(\vec{s}_{\tau})}
	\mathbf{1}_{\lbrace \vec{s}_{\tau} \in r_j \rbrace} 
	\right)\nonumber\\
	&=\prod_{j}^{} \alpha_{1,\vec{k}_{\phi}(j)} \exp 
	\left( -\eta \sum_{\substack{\tau<t \\ \vec{s}_{\tau} \in r_i}}^{} \tilde{l}_{\tau,g_{\vec{k}_{\phi}}(\vec{s}_{\tau})}
	\right).
	\end{align}
	Putting \eqref{eq:prodterm} into \eqref{eq:wrecursion} we get 
	\begin{align}
	&w_{t,i}= \frac{1}{(| \Phi_i |+1)M}\sum_{k \in \Gamma_i^o}^{}\alpha_{t,k}\nonumber\\
	+&\frac{1}{| \Phi_i |+1}\sum_{\phi \in \Phi_i}^{}\left(
	\sum_{\vec{k}_{\phi} \in \prod_{j \in \phi}^{} \Gamma_j}^{} \alpha_{1,\vec{k}_{\phi}} \exp \left( -\eta \sum_{\substack{\tau<t \\ \vec{s}_{\tau} \in r_i}}^{} \tilde{l}_{\tau,g_{\vec{k}_{\phi}(\vec{s}_{\tau})}} \right)
	\right)\nonumber \\
	=& \frac{1}{(| \Phi_i |+1)M}\sum_{k \in \Gamma_i^o}^{}\alpha_{t,k} + \frac{1}{(| \Phi_i |+1)} \sum_{\phi \in \Phi_i}^{}  \sum_{k \in \Gamma_i^{\phi}}^{} \alpha_{t,k}=\sum_{k \in \Gamma_i}^{} \alpha_{t,k},
	\end{align}
	where
	\begin{align}
	\alpha_{1,k}=& \frac{1}{(| \Phi_i |+1)M} \mathbf{1}_{\lbrace k \in \Gamma_i^o \rbrace}\nonumber\\
	+&\frac{1}{| \Phi_i |+1}
	\sum_{\phi \in \Phi_i}^{} \left(
	\mathbf{1}_{\lbrace k = \vec{k}_{\phi} \rbrace}
	\prod_{j \in \phi}^{} \alpha_{1,\vec{k}_{\phi}(j)} \right).
	\end{align}
	
	\section{Proof of Proposition 2}\label{app:3}
		Consider a specific bandit arm $m^*$. Given the context vector $\vec{s}_t$, for all $m \in \lbrace 1,2,..,M  \rbrace$, for all nodes $v_i$ in the hierarchy, we define the variables $\tilde{\alpha}_{t,m,i}$ as
	\begin{equation}
	\tilde{\alpha}_{t,m,i}=\begin{cases}
	0, &\vec{s}_t \in r_i , m \neq m^*\\
	\alpha_{t,m,i}, &\text{otherwise}
	\end{cases}
	.
	\end{equation}
	Now, from the definition of $\gamma_{t,m,i}$ in \eqref{eq:gammarecursion}, we have
	\begin{align}
	\gamma_{t,m^*,i}=&\frac{1}{(| \Phi_i |+1)M}
	\sum_{m=1}^{M} \tilde{\alpha}_{t,m,i} \nonumber \\
	+&\frac{1}{(| \Phi_i |+1)} 
	\sum_{\phi \in \Phi_i}^{}  
	\left(
	\prod_{j \in {\phi}}^{} \tilde{w}_{t,j}
	\right).
	\end{align}
	The exact same lines of the proof of Theorem \ref{th:2} hold to show that
	\begin{equation}
	\tilde{w}_{t,i}=\sum_{k \in \Gamma_i}^{} \tilde{\alpha}_{t,k},
	\end{equation}
	where
	\begin{equation}
	\tilde{\alpha}_{t,k}=\begin{cases}
	\alpha_{t,k}, & g_k(\vec{s}_t)=m^*\\
	0, &\text{otherwise}
	\end{cases}
	.
	\end{equation}
	Hence, \eqref{eq:th3} holds.\par 

\bibliographystyle{IEEEbib}
\bibliography{Ref}

\end{document}